\newtheorem{theorem}{Theorem}[section]
\newtheorem{lemma}[theorem]{Lemma}
\newtheorem{definition}{Definition}[section]
\title{Node Dependent Local Smoothing for Scalable\\ Graph Learning}
\author{%
  Wentao Zhang$^{1}$, Mingyu Yang$^1$, Zeang Sheng$^1$, Yang Li$^1$ \\ \textbf{Wen Ouyang$^2$, Yangyu Tao$^2$, 
  Zhi Yang$^{1,3}$, Bin Cui$^{1,3,4}$} \\
  $^1$School of CS, Peking University $^2$Tencent Inc.\\
  $^3$ Key Lab of High Confidence Software Technologies, Peking University\\
  $^4$Institute of Computational Social Science, Peking University (Qingdao), China\\
  $^1$\{wentao.zhang, ymyu, shengzeang18, liyang.cs, yangzhi, bin.cui\}@pku.edu.cn\\ $^2$\{gdpouyang, brucetao\}@tencent.com
}
\begin{document}

\maketitle

\begin{abstract}
Recent works reveal that feature or label smoothing lies at the core of Graph Neural Networks (GNNs). Concretely, they show feature smoothing combined with simple linear regression achieves comparable performance with the carefully designed GNNs, and a simple MLP model with label smoothing of its prediction can outperform the vanilla GCN. Though an interesting finding, smoothing has not been well understood, especially regarding how to control the extent of smoothness. Intuitively, too small or too large smoothing iterations may cause {\em under-smoothing} or {\em over-smoothing} and can lead to sub-optimal performance. Moreover, the extent of smoothness is node-specific, depending on its degree and local structure. To this end, we propose a novel algorithm called node-dependent local smoothing (NDLS), which aims to control the smoothness of every node by setting a node-specific smoothing iteration. Specifically, NDLS computes influence scores based on the adjacency matrix and selects the iteration number by setting a threshold on the scores. Once selected, the iteration number can be applied to both feature smoothing and label smoothing. Experimental results demonstrate that NDLS enjoys high accuracy -- state-of-the-art performance on node classifications tasks, flexibility -- can be incorporated with any models, scalability and efficiency -- can support large scale graphs with fast training.

\end{abstract}

\section{Introduction}

In recent years, Graph Neural Networks (GNNs) have received a surge
of interest with the state-of-the-art performance on many graph-based
tasks~\cite{cai2020multi,zhang2019attributed, DBLP:journals/chinaf/GuoQX021, DBLP:conf/sigmod/ZhangMSJCR020, DBLP:journals/corr/abs-2011-02260, wu2020garg}.
Recent works have found that the success of GNNs can be mainly attributed to
smoothing, either at feature or label level. For example, SGC~\cite{wu2019simplifying} shows using smoothed features as input to a simple linear regression model achieves comparable performance with lots of carefully designed and complex GNNs.
At the smoothing stage, features of neighbor nodes are aggregated and combined with
the current node's feature to form smoothed features. This process is often iterated
multiple times. The smoothing is based on the assumption that labels of
nodes that are close to each other are highly correlated, therefore, the
features of nodes nearby should help predict the current node's label.

\begin{figure*}[tp!]
\centering  
\subfigure[Two nodes with different local structures]{
\label{fig:ob1}
\scalebox{0.4}{
   \includegraphics[width=1\linewidth]{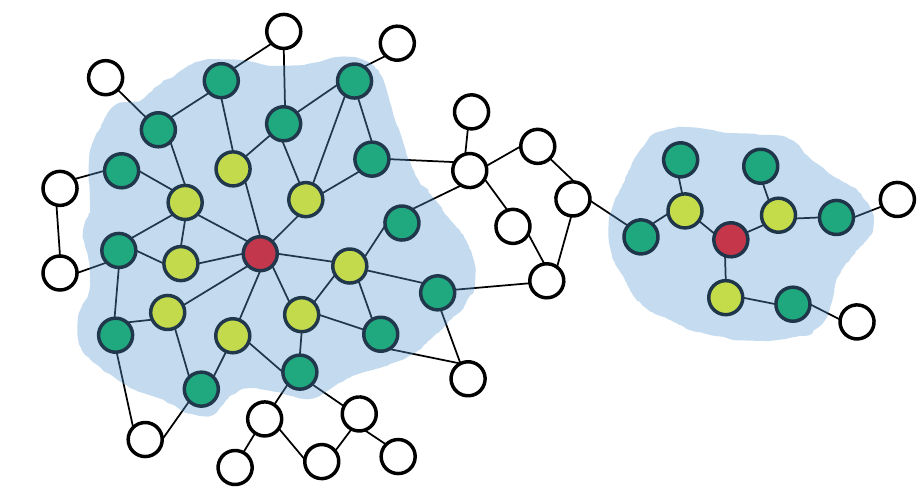}
 }}
\centering
\subfigure[The CDF of LSI in different graphs]{
\label{fig:ob2}
\scalebox{0.5}{
   \includegraphics[width=1\linewidth]{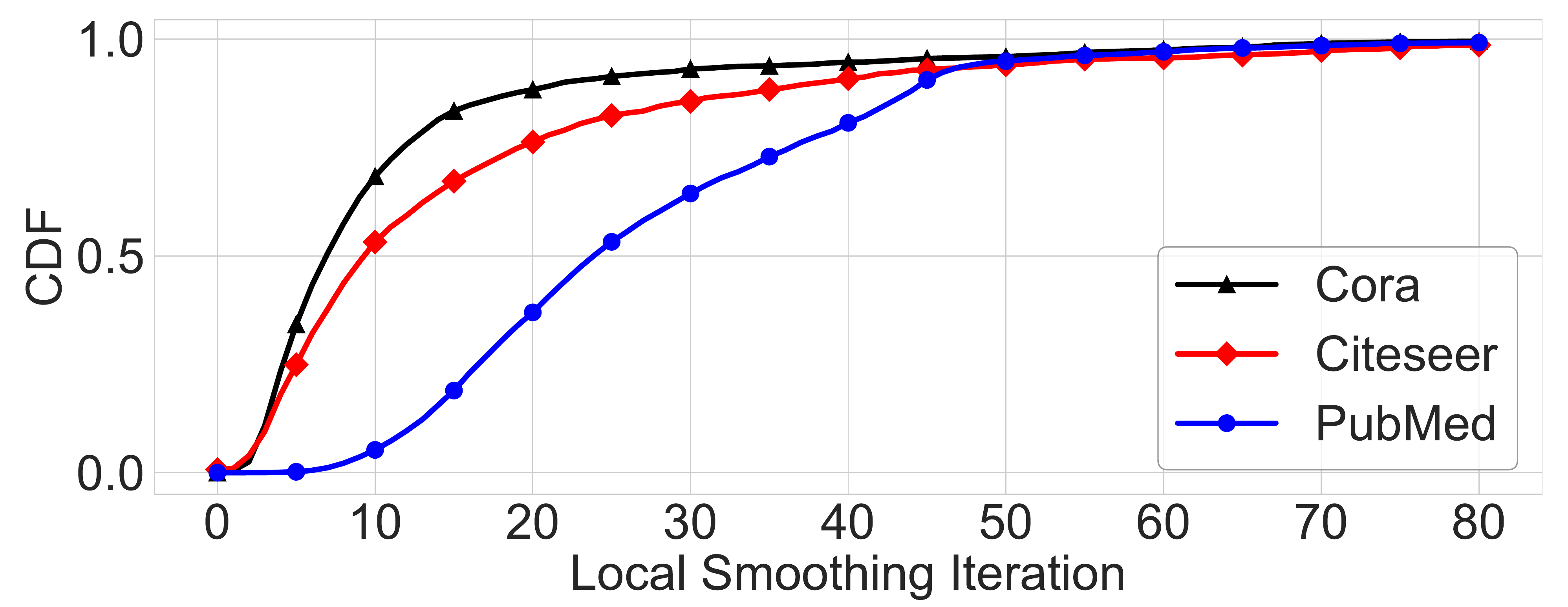}
 }}
\caption{  (Left) The node in dense region has larger smoothed area within two iterations of propagation. (Right) The CDF of LSI in three citation networks.
}
\label{fig.observation}
\vspace{-2mm}
\end{figure*}

One crucial and interesting parameter of neighborhood feature aggregation
is the number of smoothing iterations $k$, which
controls how much information is being gathered. Intuitively, an aggregation
process of $k$ iterations (or layers) enables a node to leverage information
from nodes that are $k$-hop away~\cite{ DBLP:conf/kdd/MiaoGZHLMRRSSWW21,DBLP:conf/kdd/ZhangJLSSMWY021}.
The choice of $k$ is closely related to the structural properties of graphs and has a significant impact on the model performance. However, most existing
GNNs only consider the fixed-length propagation paradigm -- a uniform $k$ for all the nodes. This is problematic since the number of iterations
should be {\em node dependent} based on its degree and local structures.
For example, as shown in Figure~\ref{fig:ob1}, the
two nodes have rather different local structures, with the left red one resides
in the center of a dense cluster and the right red one on the periphery with few connections.
The number of iterations to reach an optimal level of smoothness are rather
different for the two nodes. Ideally, poorly connected nodes (e.g., the red node on the right)
needs large iteration numbers to efficiently gather information from other
nodes while well-connected nodes (e.g., the red node on the left) should keep the iteration number small
to avoid {\em over-smoothing}. 
Though some learning-based approaches have proposed
to adaptively aggregate information for each node through gate/attention
mechanism or reinforcement learning~\cite{spinelli2020adaptive,lai2020policy,DBLP:journals/corr/abs-2108-00955, miao2021lasagne},
the performance gains are at the cost of increased training complexity, hence
not suitable for scalable graph learning.

In this paper, we propose a simple yet effective solution to this problem. Our approach,
called node-dependent local smoothing (NDLS), calculates a node-specific iteration
number for each node, referred to as local smooth iteration (LSI). Once the LSI for a specific node is computed, the corresponding
local smoothing algorithm only aggregates the information from the nodes within
a distance less than its LSI as the new feature. 
The LSI is selected based on influence scores, which measure how other nodes influence
the current node. 
NDLS sets the LSI for a specific node to be the minimum number
of iterations so that the influence score is $\epsilon$-away from the {\em over-smoothing} 
score, defined as the influence score at infinite iteration. 
The insight is that each node's influence score should be at a reasonable level.
Since the nodes with different local structures have different ``smoothing speed'',
we expect the iteration number to
be adaptive. Figure~\ref{fig:ob2} illustrates Cumulative Distribution Function
(CDF) for the LSI of individual nodes in real-world graphs.
The heterogeneous and long-tail property exists in all the datasets,
which resembles the characteristics of the degree distribution of nodes in real graphs.

Based on NDLS, we propose a new graph learning algorithm with three stages:
(1) feature smoothing with NDLS (NDLS-F); (2) model training with smoothed
features; (3) label smoothing with NDLS (NDLS-L). Note that in our framework, 
the graph structure information is only used in pre-processing and post-processing 
steps, i.e., stages (1) and (3) (See Figure~\ref{Fig.overview}). Our NDLS turns a graph learning problem into a
vanilla machine learning problem with independent samples. This simplicity enables
us to train models on larger-scale graphs. Moreover, our NDLS kernel can act as
a drop-in replacement for any other graph kernels and be combined with
existing models such as Multilayer Perceptron (MLP), SGC~\cite{wu2019simplifying}, SIGN~\cite{rossi2020sign}, S$^2$GC~\cite{zhu2021simple} and GBP~\cite{chen2020scalable}.

Extensive evaluations on seven benchmark datasets, including large-scale datasets like ogbn-papers100M~\cite{hu2021ogb}, demonstrates that NDLS achieves not only the state-of-the-art node classification performance but also high training scalability and efficiency. 
Especially, NDLS outperforms APPNP~\cite{spinelli2020adaptive} and GAT~\cite{velivckovic2017graph} by a margin of $1.0\%$-$1.9\%$ and $0.9\%$-$2.4\%$ in terms of test accuracy, while achieving up to $39\times$ and $186\times$ training speedups, respectively.

\section{Preliminaries}
In this section, we first introduce the semi-supervised node classification task
and review the prior models, based on which we derive our method in Section 3.
Consider a graph $\mathcal{G}$ = ($\mathcal{V}$, $\mathcal{E}$) with $|\mathcal{V}| = n$ nodes and $|\mathcal{E}| = m$ edges, the adjacency matrix (including self loops) is denoted as $\tilde{\mathbf{A}} \in \mathbb{R}^{n \times n}$ and the feature matrix is denoted as $\mathbf{X} = \{\boldsymbol{x}_1,\boldsymbol{x}_2 ..., \boldsymbol{x}_n\}$ in which $\boldsymbol{x}_i\in\mathbb{R}^{f}$ represents the feature vector of node $v_{i}$. Besides, $\mathbf{Y} = \{\boldsymbol{y_1},\boldsymbol{y_2} ..., \boldsymbol{y_l}\}$ is the initial label matrix consisting of one-hot label indicator vectors.
The goal is to predict the labels for nodes in the unlabeled set $\mathcal{V}_u$ with the supervision of labeled set $\mathcal{V}_l$.

\textbf{GCN}
smooths the representation of each node via aggregating its own representations and the ones of its neighbors’. This process can be defined as
\begin{equation}
    \mathbf{X}^{(k+1)}=\delta\left(\hat{\mathbf{A}}\mathbf{X}^{(k)}\mathbf{W}^{(k)}\right), \qquad \hat{\mathbf{A}} = \widetilde{\mathbf{D}}^{r-1}\tilde{\mathbf{A}}\widetilde{\mathbf{D}}^{-r},
\label{eq_GCN}
\end{equation}
where $\hat{\mathbf{A}}$ is the normalized adjacency matrix, $r \in [0, 1]$ is the convolution coefficient, and $\widetilde{\mathbf{D}}$ is the diagonal node degree matrix with self loops. 
Here $\mathbf{X}^{(k)}$ and $\mathbf{X}^{(k+1)}$ are the smoothed node features of layer $k$ and $k+1$ respectively while $\mathbf{X}^{(0)}$ is set to $\mathbf{X}$, the original feature matrix.
In addition, $\mathbf{W}^{(k)}$ is a layer-specific trainable weight matrix at layer $k$, and $\delta(\cdot)$ is the activation function.
By setting $r = $ 0.5, 1 and 0, the convolution matrix $\widetilde{\mathbf{D}}^{r-1}\tilde{\mathbf{A}}\widetilde{\mathbf{D}}^{-r}$ represents the symmetric normalization adjacency matrix $\widetilde{\mathbf{D}}^{-1/2}\tilde{\mathbf{A}}\widetilde{\mathbf{D}}^{-1/2}$~\cite{DBLP:conf/iclr/KlicperaBG19}, the transition probability matrix $\tilde{\mathbf{A}}\widetilde{\mathbf{D}}^{-1}$~\cite{DBLP:conf/iclr/ZengZSKP20}, and the reverse transition probability matrix $\widetilde{\mathbf{D}}^{-1}\tilde{\mathbf{A}}$~\cite{xu2018representation}, respectively. 

\textbf{SGC.}
For each GCN layer defined in Eq.~\ref{eq_GCN}, if the non-linear activation function $\delta(\cdot)$ is an identity function and $\mathbf{W}^{(k)}$ is an identity matrix, we get the smoothed feature after $k$-iterations propagation as $\mathbf{X}^{(k)}=\hat{\mathbf{A}}^{k}\mathbf{X}$.
Recent studies have observed that GNNs primarily derive their benefits from performing feature smoothing over graph neighborhoods rather than learning non-linear hierarchies of features as implied by the analogy to CNNs~\cite{liu2020towards,cui2020adaptive,he2020lightgcn}.
By hypothesizing that the non-linear transformations between GCN layers are not critical, SGC~\cite{wu2019simplifying} first extracts the smoothed features $\mathbf{X}^{(k)}$ then feeds them to a linear model, leading to higher scalability and efficiency.
Following the design principle of SGC, piles of works have been proposed to further improve the performance of SGC while maintaining high scalability and efficiency, such as SIGN~\cite{rossi2020sign}, S$^2$GC~\cite{zhu2021simple} and GBP~\cite{chen2020scalable}.

\textbf{Over-Smoothing~\cite{li2018deeper} issue.}
\label{label_smooth}
By continually smoothing the node feature with infinite number of propagation in SGC, the final smoothed feature $\mathbf{X}^{(\infty)}$ is
\begin{equation}
    \begin{aligned}
    &\mathbf{X}^{(\infty)}=\hat{\mathbf{A}}^{\infty}\mathbf{X},  \qquad \hat{\mathbf{A}}^{\infty}_{i,j}  =  \frac{(d_i+1)^r(d_j+1)^{1-r}}{2m+n},
    \label{Over}
    \end{aligned}
\end{equation}
where $\hat{\mathbf{A}}^{\infty}$ is the final smoothed adjacency matrix, $\hat{\mathbf{A}}^{\infty}_{i,j}$  is the weight between nodes $v_i$ and $v_j$, $d_i$ and $d_j$ are the node degrees for $v_i$ and $v_j$, respectively. 
Eq.~\eqref{Over} shows that as we smooth the node feature with an infinite number of propagations in SGC, the final feature is over-smoothed and unable to capture the full graph structure information since it only relates with the node degrees of target nodes and source nodes. For example, if we set $r = 0$ or 1, all nodes will have the same smoothed features because only the degrees of the source or target nodes have been considered.

\section{Local Smoothing Iteration (LSI)}\label{sec:LS}
The features after $k$ iterations of smoothing is $\mathbf{X}^{(k)}=\hat{\mathbf{A}}^{k}\mathbf{X}$.
Inspired by~\cite{xu2018representation}, 
we measure the influence of node $v_{j}$ on node $v_{i}$ by measuring how much a change in the input feature of $v_{j}$ affects the representation of $v_{i}$ after $k$ iterations. For any node $v_{i}$, the influence vector captures
the influences of all other nodes.
Considering the $h^{th}$ feature of $\mathbf{X}$, we define an influence matrix $I_{h}(k)$:
\begin{equation}
I_{h}(k)_{ij} = \frac{\partial \hat{\mathbf{X}}^{(k)}_{ih}}{\partial \hat{\mathbf{X}}^{(0)}_{jh}}.
\end{equation}

\begin{equation}
I(k)=\hat{\mathbf{A}}^{k},\tilde{I}_{i}=\hat{\mathbf{A}}^{\infty}
\end{equation}

Since $I_{h}(k)$ is independent to $h$, we replace $I_{h}(k)$ with $I(k)$, which can be further represented as $I(k) = I_{h}(k),\ \forall h \in \{1,2,..,f\}$, 
where $f$ indicates the number of features of $\mathbf{X}$.
We denote $I(k)_{i}$ as the $i^{th}$ row of $I(k)$, and $\tilde{I}$ as $I(\infty)$. 
Given the normalized adjacency matrix $\hat{\mathbf{A}}$, we can have $I(k)=\hat{\mathbf{A}}^{k}$ and $\tilde{I}=\hat{\mathbf{A}}^{\infty}$. 
According to Eq.~\eqref{Over}, $\tilde{I}$ converges to a unique stationary matrix independent of the distance between nodes, resulting in that the aggregated features of nodes are merely relative with their degrees (i.e., over-smoothing).

We denote $I(k)_{i}$ as the $i^{th}$ row of $I(k)$, and it means the influence from the other nodes to the node $v_{i}$ after $k$ iterations of propagation.
We introduce a new concept {\em local smoothing iteration} (parameterized by $\epsilon$), which measures the minimal number of iterations $k$ required for the influence of other nodes on node $v_{i}$ to be within an $\epsilon$-distance to the over-smoothing stationarity $\tilde{I}_{i}$.

\begin{definition}[] {\bf Local-Smoothing Iteration} (LSI, parameterized by $\epsilon$) is defined as 
\begin{equation}
K(i,\epsilon) = \operatorname{min}\{k: ||\tilde{I}_{i}-I(k)_{i}||_{2}<\epsilon\},
\end{equation}
where $||\cdot||_{2}$ is two-norm, and $\epsilon$ is an arbitrary small constant with $\epsilon > 0$. 
\end{definition}

Here $\epsilon$ is a graph-specific parameter, and a smaller $\epsilon$ indicates a stronger smoothing effect. The $\epsilon$-distance to the over-smoothing stationarity $\tilde{I}_{i}$ ensures that the smooth effect on node $v_{i}$ is sufficient and bounded to avoid over-smoothing.
As shown in Figure~\ref{fig:ob2}, we can have that the distribution of LSI owns the {\em heterogeneous and long-tail property}, where a large percentage of nodes have much smaller LSI than the rest. 
Therefore, the required LSI to approach the stationarity is heterogeneous across nodes.
Now we discuss the connection between LSI and node local structure, showcasing nodes in the sparse region (e.g., both the degrees of itself and its neighborhood are low) can greatly prolong the iteration to approach
over-smoothing stationarity. 
This heterogeneity property is not fully utilized
in the design of current GNNs, leaving
the model design in a dilemma between unnecessary iterations for a majority of nodes and insufficient iterations for the rest of nodes.
Hence, by adaptively choosing the iteration based on LSI for different nodes, we can significantly improve model performance.

\paragraph{Theoretical Properties of LSI.}
We now analyze the factors determining the LSI of a specific node. 
To facilitate the analysis, we set the coefficient $r=0$ for the normalized adjacency matrix $\hat{\mathbf{A}}$ in Eq.~\eqref{eq_GCN}, thus $\hat{\mathbf{A}} = \widetilde{\mathbf{D}}^{-1}\tilde{\mathbf{A}}$. The proofs of following theorems can be found in Appendix A.1.
\begin{theorem}
\label{theorem3.1}
Given feature smoothing $\mathbf{X}^{(k)}=\hat{\mathbf{A}}^{k}\mathbf{X}$ with $\hat{\mathbf{A}} = \widetilde{\mathbf{D}}^{-1}\tilde{\mathbf{A}}$, we have
\begin{equation}
K(i,\epsilon)\le \operatorname{log}_{\lambda_{2}}\left(\epsilon\sqrt{\frac{\tilde{d_{i}}}{2m+n}}\right),
\end{equation}
where $\lambda_{2}$ is the second largest eigenvalue of $\hat{\mathbf{A}}$, $\tilde{d_{i}}$ denotes the degree of node $v_i$ plus 1 (i.e., $\tilde{d_{i}} = d_{i}+1$), 
and $m$, $n$ denote the number of edges and nodes respectively.
\end{theorem}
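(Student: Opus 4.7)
The plan is to reduce the claim to a standard spectral mixing estimate by passing to the symmetric normalisation of $\hat{\mathbf{A}}$, read off the row norm via Cauchy--Schwarz, and then invert the resulting geometric decay in $k$. First I would exploit the fact that $\hat{\mathbf{A}} = \widetilde{\mathbf{D}}^{-1}\tilde{\mathbf{A}}$ is similar to the symmetric matrix $\mathbf{S} = \widetilde{\mathbf{D}}^{-1/2}\tilde{\mathbf{A}}\widetilde{\mathbf{D}}^{-1/2}$ via $\hat{\mathbf{A}} = \widetilde{\mathbf{D}}^{-1/2}\mathbf{S}\widetilde{\mathbf{D}}^{1/2}$, so that $\hat{\mathbf{A}}^k = \widetilde{\mathbf{D}}^{-1/2}\mathbf{S}^k\widetilde{\mathbf{D}}^{1/2}$ for every $k$. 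Writing the spectral decomposition $\mathbf{S} = \sum_\ell \lambda_\ell \boldsymbol{u}_\ell\boldsymbol{u}_\ell^\top$ in an orthonormal basis, the Perron pair is $\lambda_1 = 1$ with $\boldsymbol{u}_1 = \widetilde{\mathbf{D}}^{1/2}\mathbf{1}/\sqrt{2m+n}$, and conjugating the rank-one piece $\boldsymbol{u}_1\boldsymbol{u}_1^\top$ reproduces exactly the closed form for $\hat{\mathbf{A}}^\infty$ given in Eq.~\eqref{Over}.

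Next, introducing the symmetric residual $\mathbf{T} = \mathbf{S}^k - \boldsymbol{u}_1\boldsymbol{u}_1^\top = \sum_{\ell\ge 2}\lambda_\ell^k \boldsymbol{u}_\ell\boldsymbol{u}_\ell^\top$, the similarity yields the entrywise identity
\begin{equation*}
\bigl[\hat{\mathbf{A}}^k - \hat{\mathbf{A}}^\infty\bigr]_{ij} \;=\; \sqrt{\tilde{d}_j/\tilde{d}_i}\,\mathbf{T}_{ij}.
\end{equation*}
Applying Cauchy--Schwarz to $\mathbf{T}_{ij} = \sum_{\ell\ge 2}\lambda_\ell^k (\boldsymbol{u}_\ell)_i (\boldsymbol{u}_\ell)_j$ together with the orthonormality relation $\sum_\ell (\boldsymbol{u}_\ell)_i^{\,2} = 1$ then gives $|\mathbf{T}_{ij}| \le \lambda_2^k$. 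Squaring the identity above and summing over $j$ produces the row norm
\begin{equation*}
\bigl\|\tilde{I}_i - I(k)_i\bigr\|_2^{\,2} \;=\; \frac{1}{\tilde{d}_i}\sum_j \tilde{d}_j\,\mathbf{T}_{ij}^{\,2} \;\le\; \frac{\lambda_2^{2k}}{\tilde{d}_i}\sum_j \tilde{d}_j \;=\; \frac{\lambda_2^{2k}(2m+n)}{\tilde{d}_i},
\end{equation*}
where the last equality uses $\sum_j \tilde{d}_j = 2m+n$. Hence $\|\tilde{I}_i - I(k)_i\|_2 < \epsilon$ whenever $\lambda_2^k < \epsilon\sqrt{\tilde{d}_i/(2m+n)}$, and since $\log_{\lambda_2}$ is monotonically decreasing on $(0,1)$ this rearranges to $k > \log_{\lambda_2}\bigl(\epsilon\sqrt{\tilde{d}_i/(2m+n)}\bigr)$, which is exactly the stated upper bound on $K(i,\epsilon)$.

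The main obstacle is justifying the inequality $|\lambda_\ell|\le \lambda_2$ used when collapsing the spectral sum: if the smallest eigenvalue $\lambda_n$ were very negative, the sharp bound would involve $\max_{\ell\ge 2}|\lambda_\ell|$ rather than $\lambda_2$. I would resolve this either by reading $\lambda_2$ as the second largest eigenvalue in absolute value, or by invoking the fact that the self-loops in $\tilde{\mathbf{A}}$ push the spectrum of $\mathbf{S}$ away from $-1$ so that $|\lambda_n| \le \lambda_2$. Modulo this technical caveat, the remainder of the argument is a routine unfolding of the standard reversible-chain mixing estimate.
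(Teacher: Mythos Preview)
Your argument is correct and follows essentially the same route as the paper: both proofs rest on the entrywise mixing estimate
\[
\bigl|[\hat{\mathbf{A}}^k - \hat{\mathbf{A}}^\infty]_{ij}\bigr| \;\le\; \sqrt{\tilde d_j/\tilde d_i}\,\lambda_2^{\,k},
\]
then square, sum over $j$ using $\sum_j \tilde d_j = 2m+n$, and invert in $k$. The only difference is that the paper simply cites this entrywise bound from Chung's \emph{Spectral Graph Theory}, whereas you rederive it from scratch via the similarity $\hat{\mathbf{A}} = \widetilde{\mathbf{D}}^{-1/2}\mathbf{S}\,\widetilde{\mathbf{D}}^{1/2}$, the spectral expansion of $\mathbf{S}$, and Cauchy--Schwarz. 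Your version is therefore more self-contained but not a genuinely different method. The caveat you flag about $\lambda_2$ versus $\max_{\ell\ge 2}|\lambda_\ell|$ is a real subtlety that the paper (and Chung's statement as quoted) leaves implicit; your proposed resolutions are the standard ones.
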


Note that $\lambda_{2} \le {1}$. 
Theorem \ref{theorem3.1} shows that the upper-bound of the LSI is positively correlated with the scale of the graph ($m,n$), the sparsity of the graph (small $\lambda_{2}$ means strong connection and low sparsity, and vice versa), and negatively correlated with the degree of node $v_i$.

\begin{theorem}
\label{theorem3.2}
For any nodes $i$ in a graph $\mathcal{G}$,
\begin{equation}
K(i,\epsilon)\le \operatorname{max}\left\{K(j,\epsilon),j\in N(i)\right\}+1,
\end{equation}
where $N(i)$ is the set of node $v_{i}$'s neighbours.
\end{theorem}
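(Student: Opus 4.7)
The plan is to derive a one-step recursion for the residual $v^{(k)} := \tilde{I}-I(k) = \hat{\mathbf{A}}^\infty-\hat{\mathbf{A}}^k$. Because $\tilde{I}=\hat{\mathbf{A}}^\infty$ is fixed by $\hat{\mathbf{A}}$ (i.e.\ $\hat{\mathbf{A}}\cdot\hat{\mathbf{A}}^\infty=\hat{\mathbf{A}}^\infty$), left-multiplying $v^{(k)}$ by $\hat{\mathbf{A}}$ produces exactly $v^{(k+1)}$. Reading off row $i$ and observing that only indices in the closed neighborhood $N(i)$ contribute (self-loops of $\tilde{\mathbf{A}}$ put $i$ in its own neighborhood) gives
\begin{equation*}
v^{(k+1)}_i \;=\; \sum_{j\in N(i)} \hat{\mathbf{A}}_{ij}\,v^{(k)}_j .
\end{equation*}

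The triangle inequality together with the row-stochasticity $\sum_j\hat{\mathbf{A}}_{ij}=1$ of $\hat{\mathbf{A}}=\widetilde{\mathbf{D}}^{-1}\tilde{\mathbf{A}}$ then yields
\begin{equation*}
\|v^{(k+1)}_i\|_2 \;\le\; \sum_{j\in N(i)} \hat{\mathbf{A}}_{ij}\,\|v^{(k)}_j\|_2 .
\end{equation*}
Setting $K^* := \max\{K(j,\epsilon):\,j\in N(i)\}$, if $\|v^{(K^*)}_j\|_2<\epsilon$ for every $j\in N(i)$, the right-hand side at $k=K^*$ is a strict convex combination of values below $\epsilon$, so $\|v^{(K^*+1)}_i\|_2<\epsilon$ and therefore $K(i,\epsilon)\le K^*+1$ by the very definition of $K(i,\epsilon)$.

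The main obstacle is the claim ``$\|v^{(K^*)}_j\|_2<\epsilon$ for every $j\in N(i)$'', since by definition $K(j,\epsilon)$ only records the first iteration at which $\|v^{(k)}_j\|_2$ dips below $\epsilon$ and does not a priori prevent later excursions above $\epsilon$. I would close the gap with a pointwise monotonicity lemma asserting $\|v^{(k+1)}_j\|_2\le\|v^{(k)}_j\|_2$ for every $k$ and $j$. The natural tool is the symmetric factorization $\hat{\mathbf{A}}=\widetilde{\mathbf{D}}^{-1/2}\mathbf{S}\,\widetilde{\mathbf{D}}^{1/2}$ with $\mathbf{S}=\widetilde{\mathbf{D}}^{-1/2}\tilde{\mathbf{A}}\widetilde{\mathbf{D}}^{-1/2}$ symmetric, of spectrum contained in $[-1,1]$ and top eigenvector aligned with $\widetilde{\mathbf{D}}^{1/2}\mathbf{1}$, together with the fact that each row of $v^{(k)}$ sums to zero (difference of two probability vectors). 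After the diagonal change of variables the iterate sits in the subspace on which $\mathbf{S}$ strictly contracts, and transporting this contraction back through the $\widetilde{\mathbf{D}}^{\pm1/2}$ rescaling yields non-increase in the unweighted $\ell_2$ norm; this transport is the fiddly part, since the unweighted 2-norm is not naturally adapted to the random-walk dynamics and care is needed when converting between the $\ell_2$ and $\ell_2(\pi)$ scales. With monotonicity in hand, the preceding chain of inequalities delivers the stated bound.
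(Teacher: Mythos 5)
Your main chain of reasoning is the same as the paper's: write $\tilde I_i - I(k+1)_i$ as the $\hat{\mathbf{A}}_{ij}$-weighted combination of the rows $\tilde I_j - I(k)_j$ over the closed neighborhood (using $\hat{\mathbf{A}}\hat{\mathbf{A}}^{\infty}=\hat{\mathbf{A}}^{\infty}$ and row-stochasticity), bound the norm by the maximum over neighbors, and conclude $K(i,\epsilon)\le K^*+1$. You are in fact more careful than the paper at the first step: the paper writes $\|\tilde{\pi}-e_iP^{t+1}\|_2=\frac{1}{|N(i)|}\sum_{j\in N(i)}\|\tilde{\pi}-e_jP^t\|_2$ as an \emph{equality of norms}, which only holds at the vector level before taking norms; your triangle-inequality version is the correct statement. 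The gap you flag is also genuine and is present in the paper's own proof: $K(j,\epsilon)$ only certifies that the residual of node $v_j$ is below $\epsilon$ at time $K(j,\epsilon)$, not at the later common time $K^*=\max_j K(j,\epsilon)$, and the paper passes from ``$\max_j\|\tilde{\pi}-e_jP^{t}\|_2\le\epsilon$'' to the conclusion without ever establishing that the residuals stay below $\epsilon$ after first crossing.

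Where your proposal runs into trouble is the patch. Monotonicity of $k\mapsto\|\tilde{\pi}-e_iP^{k}\|_2$ in the \emph{unweighted} $\ell_2$ norm does not follow from the spectral picture you sketch. You would need $\|wP\|_2\le\|w\|_2$ on the relevant mean-zero subspace, i.e.\ a bound on the largest \emph{singular} value of $P=\widetilde{\mathbf{D}}^{-1}\tilde{\mathbf{A}}$ there, and $\sigma_{\max}(P)$ genuinely exceeds $1$ on irregular graphs (e.g.\ stars): the contraction of the symmetric $\mathbf{S}$ does not survive the conjugation by $\widetilde{\mathbf{D}}^{\pm 1/2}$, because the trailing $\widetilde{\mathbf{D}}^{1/2}$ amplifies high-degree coordinates. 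The quantity that is provably non-increasing for this reversible walk is the $\tilde{\pi}$-weighted ($\chi^2$) distance $\sum_j(\cdot)^2/\tilde{\pi}_j$, not the plain $\ell_2$ distance appearing in the definition of $K(i,\epsilon)$. So as written your monotonicity lemma is unproved and not obviously true, and the ``fiddly part'' you defer is exactly where the argument breaks; a clean repair would require either restating the LSI in the degree-weighted norm (where monotonicity is standard) or arguing directly with the monotone upper bound $\sqrt{(2m+n)/\tilde d_j}\,\lambda_2^{t}$ of Lemma A.1, which changes what is being compared against $\epsilon$. In short: right skeleton, correctly identified hole (one the paper shares), but the proposed plug does not fit as described.
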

Theorem \ref{theorem3.2} indicates that the difference between two neighboring nodes' LSIs is no more than $1$, therefore the nodes with a super-node as neighbors (or neighbor's neighbors) may have small LSIs. 
That is to say, the sparsity of the local area, where a node locates, also affects its LSI positively.
Considering Theorems \ref{theorem3.1} and \ref{theorem3.2} together, we can have a union upper-bound of $K(i,\epsilon)$ as
\begin{equation}
K(i,\epsilon)\le \operatorname{min}\left\{\operatorname{max}\left\{K(j,\epsilon),j\in N(i)\right\}+1,\operatorname{log}_{\lambda_{2}}\left(\epsilon\sqrt{\frac{\tilde{d_{i}}}{2m+n}}\right)\right\}.
\end{equation}

\section{NDLS Pipeline}
The basic idea of NDLS is to utilize the LSI heterogeneity to perform a node-dependent aggregation over a neighborhood within a distance less than the specific LSI for each node.
Further, we propose a simple pipeline with three main parts
(See Figure~\ref{Fig.overview}): (1) a node-dependent local smoothing of the feature (NDLS-F) over the graph, (2) a base prediction result with the smoothed feature, (3) a node-dependent local smoothing of the label predictions (NDLS-L) over the graph. 
Note this pipeline is not trained in an end-to-end way, the stages (1) and (3) in NDLS are only the pre-processing and post-processing steps, respectively. 
Furthermore, the graph structure is only used
in the pre/post-processing NDLS steps, not for
the base predictions. 
Compared with prior GNN models, this key design enables higher scalability and a faster training process.

Based on the graph structure, we first compute the node-dependent {\em local smoothing iteration} that maintains a proper distance to the over-smoothing stationarity.
Then the corresponding local smoothing kernel only aggregates the information (feature or prediction) for each node from the nodes within a distance less than its LSI value. 
The combination of NDLS-F and NDLS-L takes advantage of both label smoothing (which tends to perform fairly well on its own without node features) and the node feature smoothing.
We will see that combining these complementary signals yields
state-of-the-art predictive accuracy. Moreover, our NDLS-F kernel can act as
a drop-in replacement for graph kernels in other scalable GNNs such as SGC, S$^2$GC, GBP, etc.

\begin{figure}[tpb]
    \centering
    \includegraphics[width=0.9\textwidth]{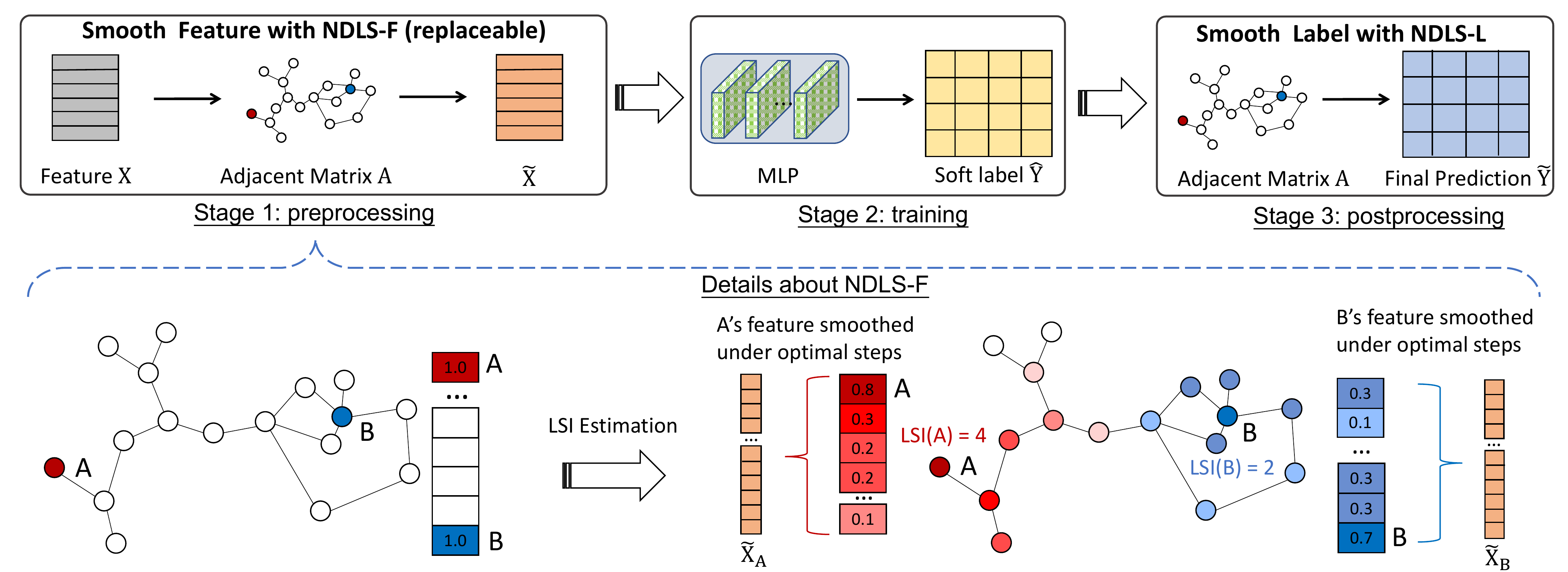}
    \caption{Overview of the proposed NDLS method, including (1) feature smoothing with NDLS (NDLS-F), (2) model training with smoothed
features, and (3) label smoothing with NDLS (NDLS-L). NDLS-F and NDLS-L correspond to pre-processing and post-processing steps respectively.
}  
     \vspace{-2mm}
    \label{Fig.overview}
\end{figure}

\subsection{Smooth Features with NDLS-F}
Once the node-dependent LSI $K(i,\epsilon)$ for a specific node $i$ is obtained, we smooth the initial input feature ${\mathbf{X}}_i$ of node $i$ with node-dependent LSI as:
\begin{equation}
\widetilde{\mathbf{X}}_i(\epsilon) =  \frac{1}{K(i,\epsilon)+1}\sum_{k=0}^{K(i,\epsilon)} {\mathbf{X}}_i^{(k)}.
\label{FLS}
\end{equation}
To capture sufficient neighborhood information, for each node $v_{i}$, we average its multi-scale features $\{\mathbf{X}^{(k)}_i \mid k \leq K(i,\epsilon)\}$ obtained by aggregating information within $k$ hops from the node $v_{i}$.

The matrix form of the above equation can be formulated as
\begin{equation}
\widetilde{\mathbf{X}}(\epsilon) = \sum_{k=0}^{\max\limits_{i} K(i,\epsilon)}
\mathbf{M}^{(k)} {\mathbf{X}}^{(k)}, \qquad \mathbf{M^{(k)}}_{ij}=
\begin{cases}
\frac{1}{K(i,\epsilon)+1}, \quad i=j \quad and \quad k \le K(i,\epsilon)\\
0, \quad \quad \quad \operatorname{otherwise}
\end{cases},
\label{eqm}
\end{equation}
where $\mathbf{M^{(k)}}$ is a set of diagonal matrix.

 \subsection{Simple Base Prediction}
With the smoothed feature $\widetilde{\mathbf{X}}$ according to Eq.~\ref{FLS}, we then train a model to minimize the loss -- $\sum_{v_i\in \mathcal{V}_l} \ell \left(\boldsymbol{y}_i, f(\widetilde{\mathbf{X}}_i)\right)$, where $\widetilde{\mathbf{X}}_i$ denotes the $i^{th}$ row of $\widetilde{\mathbf{X}}$, $\ell$ is the cross-entropy loss function, and $f(\widetilde{\mathbf{X}}_i)$ is the predictive label distribution for node $v_i$. In NDLS, the default $f$ is a MLP model and $\hat{\mathbf{Y}} = f(\widetilde{\mathbf{X}})$ is its soft label predicted (softmax output).
Note that, many other models such as Random Forest~\cite{liaw2002classification} and XGBoost~\cite{chen2016xgboost} could also be used in NDLS (See more results in Appendix A.2).

 \subsection{Smooth Labels with NDLS-L}
Similar to the feature propagation, we can also propagate the soft label $\hat{\mathbf{Y}}$ with $\hat{\mathbf{Y}}^{(k)}=\hat{\mathbf{A}}^{k}\hat{\mathbf{Y}}$.
Considering the influence matrix of softmax label $J_{h}(k)$.
\begin{equation}
J_{h}(k)_{ij} = \frac{\partial \hat{\mathbf{Y}}^{(k)}_{ih}}{\partial \hat{\mathbf{Y}}^{(0)}_{jh}}.
\end{equation}
According to the definition above we have that 
\begin{equation}
J_{h}(k) = I_{h}(k), \forall h \in \{1,2,..,f\}.
\end{equation}
Therefore, local smoothing can be further applied to address over-smoothing in label propagation.
Concretely, we smooth an initial soft label $\hat{\mathbf{Y}}_i$ of node $v_i$ with NDLS as follows
\begin{equation}
\widetilde{\mathbf{Y}}_i(\epsilon) =  \frac{1}{K(i,\epsilon)+1}\sum_{k=0}^{K(i,\epsilon)} \hat{\mathbf{Y}}_i^{(k)}.
\label{LLS}
\end{equation}
Similarly, the matrix form of the above equation can be formulated as
\begin{equation}
\widetilde{\mathbf{Y}}(\epsilon) = \sum_{k=0}^{\max\limits_{i} K(i,\epsilon)}
\mathbf{M}^{(k)} {\hat{\mathbf{Y}}}^{(k)},
\end{equation}
where $\mathbf{M^{(k)}}$ follows the definition in Eq.~\eqref{eqm}.

\section{Comparison with Existing Methods}
\textbf{Decoupled GNNs.} The aggregation and transformation operations in coupled GNNs (i.e., GCN~\cite{kipf2016semi}, GAT~\cite{velivckovic2017graph} and JK-Net~\cite{xu2018representation}) are inherently intertwined in Eq.~\eqref{eq_GCN}, so the propagation iterations $L$ always equals to the transformation iterations $K$. Recently, some decoupled GNNs (e.g., PPNP~\cite{DBLP:conf/iclr/KlicperaBG19}, PPRGo~\cite{bojchevski2020scaling}, APPNP~\cite{DBLP:conf/iclr/KlicperaBG19}, AP-GCN~\cite{spinelli2020adaptive} and DAGNN~\cite{liu2020towards}) argue the entanglement of these two operations limits the propagation depth and representation ability of GNNs, so they first do the transformation and then smooth and propagate the predictive soft label with higher depth in an end-to-end manner. Especially, AP-GCN and DAGNN both use a learning mechanism to learn propagation adaptively.
Unfortunately, all these coupled and decoupled GNNs are hard to scale to large graphs -- {\em scalability issue} since
they need to repeatedly perform an expensive recursive neighborhood expansion in multiple propagations of the features or soft label predicted. NDLS addresses this issue by dividing the training process into multiple stages.

\textbf{Sampling-based GNNs.} An intuitive method to tackle the recursive neighborhood expansion problem is sampling. As a node-wise sampling method, GraphSAGE~\cite{hamilton2017inductive} samples the target nodes as a mini-batch and samples a fixed size set of neighbors for computing. VR-GCN~\cite{DBLP:conf/icml/ChenZS18} analyzes the variance reduction on node-wise sampling, and it can reduce the size of samples with an additional memory cost.
In the layer level, Fast-GCN~\cite{DBLP:conf/iclr/ChenMX18} samples a fixed number of nodes at each layer, and ASGCN~\cite{DBLP:conf/nips/Huang0RH18} proposes the adaptive layer-wise sampling with better variance control.
For the graph-wise sampling, Cluster-GCN~\cite{chiang2019cluster} clusters the nodes and only samples the nodes in the clusters, and GraphSAINT~\cite{DBLP:conf/iclr/ZengZSKP20} directly samples a subgraph for mini-batch training. We don't use sampling in NDLS since the sampling quality highly influences the classification performance.

\textbf{Linear Models.} Following SGC~\cite{wu2019simplifying}, some recent methods remove the non-linearity between each layer in the forward propagation. SIGN~\cite{rossi2020sign} allows using different local graph operators and proposes to concatenate the different iterations of propagated features. S$^2$GC~\cite{zhu2021simple} proposes the simple spectral graph convolution to average the propagated features in different iterations. In addition, GBP~\cite{chen2020scalable} further improves the combination process by weighted averaging, and all nodes in the same layer share the same weight. In this way, GBP considers the smoothness in a layer perspective way. 
Similar to these works, we also use a linear model for higher training scalability. 
The difference lies in that we consider the smoothness from a node-dependent perspective and each node in NDLS has a personalized aggregation iteration with the proposed local smoothing mechanism.

\begin{table*}[tbp]
\caption{Algorithm analysis for existing scalable GNNs.  $n$, $m$, $c$, and $f$ are the number of nodes, edges, classes,  and feature dimensions, respectively. $b$ is the batch size, and $k$ refers to the number of sampled nodes. $L$ corresponds to the number of times we aggregate features, $K$ is the number of layers in MLP classifiers. For the coupled GNNs, we always have $K = L$.} 
\vspace{-2mm}
    \centering
    \resizebox{.95\linewidth}{!}{
    \begin{tabular}{l|c|c|c|c|c}
        \toprule
        \textbf{Type} & \textbf{Method} & \textbf{Preprocessing and postprocessing} & \textbf{Training}   & \textbf{Inference} & \textbf{Memory} \\
        \midrule
        Node-wise sampling & GraphSAGE & - & $\mathcal{O}(k^{L}nf^2)$ & $\mathcal{O}(k^{L}nf^2)$ & $\mathcal{O}(bk^{L}f + Lf^2)$\\
        \hline
        Layer-wise sampling & FastGCN & - & $\mathcal{O}(kLnf^2)$  & $\mathcal{O}(kLnf^2)$ & $\mathcal{O}(bkLf + Lf^2)$\\
        \hline
        Graph-wise sampling & Cluster-GCN & $\mathcal{O}(m)$ & $\mathcal{O}(Lmf+Lnf^2)$  & $\mathcal{O}(Lmf+Lnf^2)$& $\mathcal{O}(bLf + Lf^2)$\\
        \hline
        \multirow{4}{*}{Linear model} & SGC & $\mathcal{O}(Lmf)$ & $\mathcal{O}(nf^2)$  & $\mathcal{O}(nf^2)$& $\mathcal{O}(bf+ f^2)$\\
        & S$^2$GC & $\mathcal{O}(Lmf)$ & $\mathcal{O}(nf^2)$   & $\mathcal{O}(nf^2)$ & $\mathcal{O}(bf+ f^2)$\\
        & SIGN & $\mathcal{O}(Lmf)$ & $\mathcal{O}(Knf^2)$  & $\mathcal{O}(Knf^2)$ & $\mathcal{O}(bLf+ Kf^2)$\\
        & GBP & $\mathcal{O}(Lnf + L\frac{\sqrt{m\lg{n}}}{\varepsilon})$ & $\mathcal{O}(Knf^2)$   & $\mathcal{O}(Knf^2)$ & $\mathcal{O}(bf+ Kf^2)$\\
        \hline
        \multirow{1}{*}{Linear model}
        & NDLS & $\mathcal{O}(Lmf+Lmc)$ & $\mathcal{O}(Knf^2)$   & $\mathcal{O}(Knf^2)$ & $\mathcal{O}(bf+ Kf^2)$\\
        \bottomrule
    \end{tabular}}
    
    \label{algorithm analysis}
\end{table*}

\begin{table*}[t]
\small
\centering
\resizebox{.99\linewidth}{!}{
\caption{Overview of datasets and task types (T/I represents Transductive/Inductive).} \label{Dataset}
\begin{tabular}{ccccccccc}
\toprule
\textbf{Dataset}&\textbf{\#Nodes}& \textbf{\#Features}&\textbf{\#Edges}&\textbf{\#Classes}&\textbf{\#Train/Val/Test}&\textbf{Type}&\textbf{Description}\\
\midrule
Cora& 2,708 & 1,433 &5,429&7& 140/500/1,000 & T&citation network\\
Citeseer& 3,327 & 3,703&4,732&6& 120/500/1,000 & T&citation network\\
Pubmed& 19,717 & 500 &44,338&3& 60/500/1,000 & T&citation network\\
Industry & 1,000,000 & 64 & 1,434,382 & 253 & 5K/10K/30K&T&short-form video network\\
ogbn-papers100M & 111,059,956 & 128 & 1,615,685,872 & 172 & 
1,207K/125K/214K&T&citation network\\
\midrule
Flickr& 89,250 & 500 & 899,756 & 7 &  44K/22K/22K 
& I &image network\\
Reddit& 232,965 & 602 & 11,606,919 & 41 &  
155K/23K/54K
& I&social network \\
\bottomrule
\label{data}
\end{tabular}}
\end{table*}

Table~\ref{algorithm analysis} compares the asymptotic complexity of NDLS with several representative and scalable GNNs.
In the stage of the preprocessing, the time cost of clustering in Cluster-GCN is $\mathcal{O}(m)$ and the time complexity of most linear models is $\mathcal{O}(Lmf)$. 
Besides, NDLS has an extra time cost $\mathcal{O}(Lmc)$ for the postprocessing in label smoothing.
GBP conducts this process approximately with a bound of $\mathcal{O}(Lnf + L\frac{\sqrt{m\lg{n}}}{\varepsilon})$, where $\varepsilon$ is a error threshold. 
Compared with the sampling-based GNNs, the linear models usually have smaller training and inference complexity, i.e., higher efficiency. 
Memory complexity is a crucial factor in large-scale graph learning because it is difficult for memory-intensive algorithms such as GCN and GAT to train large graphs on a single machine.
Compared with SIGN, both GBP and NDLS do not need to store smoothed features in different iterations, and the feature storage complexity can be reduced from $\mathcal{O}(bLf)$ to $\mathcal{O}(bf)$.

\section{Experiments}
In this section, we verify the effectiveness of NDLS on seven real-world graph datasets. 
We aim to answer the following four questions. \textbf{Q1:} Compared with current SOTA GNNs, can NDLS achieve higher predictive accuracy and why?
\textbf{Q2:} Are NDLS-F and NDLS-L better than the current feature and label smoothing mechanisms (e.g., the weighted feature smoothing in GBP and the adaptive label smoothing in DAGNN)? \textbf{Q3:} Can NDLS obtain higher efficiency over the considered GNN models? \textbf{Q4:} How does NDLS perform on sparse graphs (i.e., low label/edge rate, missing features)? 

\subsection{Experimental Setup}

\textbf{Datasets.}
We conduct the experiments on (1) six publicly partitioned datasets, including four citation networks (Citeseer, Cora, PubMed, and ogbn-papers100M) in~\cite{kipf2016semi, hu2021ogb} and two social networks (Flickr and Reddit) in~\cite{DBLP:conf/iclr/ZengZSKP20}, and (2) one short-form video recommendation graph (Industry) from our industrial cooperative enterprise.
The dataset statistics are shown in Table~\ref{data} and more details about these datasets can be found in Appendix A.3.

\textbf{Baselines.}
In the transductive setting, we compare our method with (1) the coupled GNNs: GCN~\cite{kipf2016semi}, GAT~\cite{velivckovic2017graph} and JK-Net~\cite{xu2018representation}; (2) the decoupled GNNs: APPNP~\cite{DBLP:conf/iclr/KlicperaBG19}, AP-GCN~\cite{spinelli2020adaptive}, DAGNN (Gate)~\cite{liu2020towards}, and PPRGo~\cite{bojchevski2020scaling}; 
(3) the linear-model-based GNNs: MLP, SGC~\cite{wu2019simplifying}, SIGN~\cite{rossi2020sign}, S$^2$GC~\cite{zhu2021simple} and GBP~\cite{chen2020scalable}. 
In the inductive setting, the compared baselines are sampling-based GNNs: GraphSAGE~\cite{hamilton2017inductive}, FastGCN~\cite{DBLP:conf/iclr/ChenMX18}, ClusterGCN~\cite{chiang2019cluster} and GraphSAINT~\cite{DBLP:conf/iclr/ZengZSKP20}.
Detailed descriptions of these baselines are provided in Appendix A.4.

\textbf{Implementations.}
To alleviate the influence of randomness, we repeat each method ten times and report the mean performance. 
The hyper-parameters of baselines are tuned by OpenBox~\cite{DBLP:conf/kdd/LiSZCJLJG0Y0021} or set according to the original paper if available. Please refer to Appendix A.5 for more details.

\begin{table}[tpb!]
\caption{Results of transductive settings. OOM means ``out of memory''.}
\vspace{-2mm}
\centering
{
\noindent
\renewcommand{\multirowsetup}{\centering}
\resizebox{0.9\linewidth}{!}{
\begin{tabular}{ccccccccccc}
\toprule
\textbf{Type}&\textbf{Models}&\textbf{Cora}& \textbf{Citeseer}&\textbf{PubMed}&
{\textbf{\makecell{Industry}}}&{\textbf{\makecell{ogbn-papers100M}}}\\
\midrule
\multirowcell{3}{Coupled}&
GCN& 81.8$\pm$0.5 & 70.8$\pm$0.5 &79.3$\pm$0.7&45.9$\pm$0.4&OOM  \\
&GAT& 83.0$\pm$0.7 & 72.5$\pm$0.7 &79.0$\pm$0.3&46.8$\pm$0.7&OOM   \\
&JK-Net& 81.8$\pm$0.5  & 70.7$\pm$0.7 & 78.8$\pm$0.7 & 47.2$\pm$0.3&OOM   \\
\midrule
\multirowcell{5}{Decoupled}&
APPNP& 83.3$\pm$0.5 & 71.8$\pm$0.5 & 80.1$\pm$0.2&46.7$\pm$0.6&OOM \\
&AP-GCN& 83.4$\pm$0.3& 71.3$\pm$0.5& 79.7$\pm$0.3&46.9$\pm$0.7&OOM \\
&PPRGo& 82.4$\pm$0.2& 71.3$\pm$0.5& 80.0$\pm$0.4&46.6$\pm$0.5&OOM \\
&DAGNN (Gate)& 84.4$\pm$0.5& 73.3$\pm$0.6& 80.5$\pm$0.5&47.1$\pm$0.6&OOM \\
&DAGNN (NDLS-L)$^*$&84.4$\pm$0.6& 73.6$\pm$0.7& 80.9$\pm$0.5&47.2$\pm$0.7&OOM \\
\midrule
\multirowcell{5}{Linear}&
MLP & 61.1$\pm$0.6&61.8$\pm$0.8&72.7$\pm$0.6&41.3$\pm$0.8&47.2$\pm$0.3\\
&SGC & 81.0$\pm$0.2 & 71.3$\pm$0.5 & 78.9$\pm$0.5&45.2$\pm$0.3&63.2$\pm$0.2\\
&SIGN& 82.1$\pm$0.3 & 72.4$\pm$0.8 &79.5$\pm$0.5&46.3$\pm$0.5&64.2$\pm$0.2\\
&S$^2$GC& 82.7$\pm$0.3 & 73.0$\pm$0.2 &79.9$\pm$0.3&46.6$\pm$0.6&64.7$\pm$0.3\\
&GBP& 83.9$\pm$0.7 & 72.9$\pm$0.5 &80.6$\pm$0.4&46.9$\pm$0.7&65.2$\pm$0.3\\
\midrule
\multirowcell{4}{Linear}
&NDLS-F+MLP$^*$& 84.1$\pm$0.6 & 73.5$\pm$0.5 &81.1$\pm$0.6&47.5$\pm$0.7&65.3$\pm$0.5\\ 
&
MLP+NDLS-L$^*$&83.9$\pm$0.6&73.1$\pm$0.8&81.1$\pm$0.6&46.9$\pm$0.7&64.6$\pm$0.4\\
&SGC+NDLS-L$^*$&84.2$\pm$0.2&73.4$\pm$0.5&81.1$\pm$0.4&47.1$\pm$0.6&64.9$\pm$0.3\\
&NDLS$^*$&\textbf{84.6$\pm$0.5}&\textbf{73.7$\pm$0.6}&\textbf{81.4$\pm$0.4}&\textbf{47.7$\pm$0.5}&\textbf{65.6$\pm$0.3}\\
\bottomrule
\end{tabular}}}
\label{transductive}
\end{table}

\begin{minipage}[tpb!]{\textwidth}
  \begin{minipage}[b]{0.45\textwidth}
    \centering
    \scalebox{0.2}{
     \includegraphics{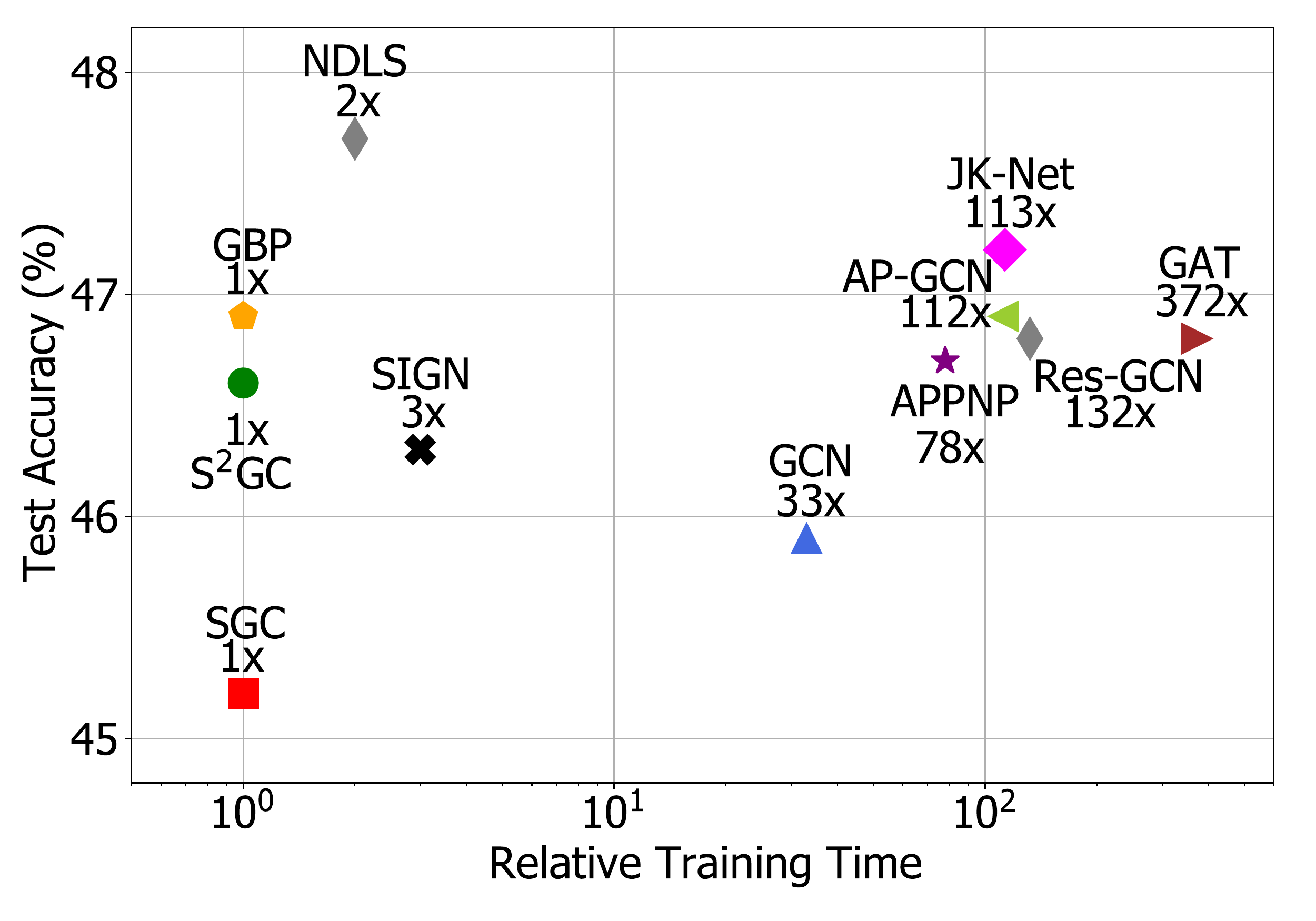}}
    \captionof{figure}{Performance along with training time on the Industry dataset.}
    \label{fig:efficiency-NDLS}
  \end{minipage}
  \hspace{4mm}
  \resizebox{0.45\textwidth}{!}{
  \begin{minipage}[b]{0.48\textwidth}
     \centering
     \captionof{table}{Results of inductive settings.} 
     \begin{tabular}{cccc}
        \toprule
        \textbf{Models}& \textbf{Flickr}&\textbf{Reddit}\\
        \midrule
        GraphSAGE & 50.1$\pm$1.3 & 95.4$\pm$0.0 \\
        FastGCN & 50.4$\pm$0.1 & 93.7$\pm$0.0 \\
        ClusterGCN & 48.1$\pm$0.5 & 95.7$\pm$0.0 \\
        GraphSAINT & 51.1$\pm$0.1 & 96.6$\pm$0.1\\
        \midrule
        NDLS-F+MLP$^*$& 51.9$\pm$0.2  & 96.6$\pm$0.1  \\
        GraphSAGE+NDLS-L$^*$& 51.5$\pm$0.4  & 96.3$\pm$0.0  \\
        NDLS$^*$&\textbf{52.6$\pm$0.4}&\textbf{96.8$\pm$0.1}\\
        \bottomrule
       \end{tabular}
    \label{inductive}
    \end{minipage}
    }
  \end{minipage}

\subsection{Experimental Results.}  

\textbf{End-to-end comparison.}  To answer \textbf{Q1}, Table~\ref{transductive} and ~\ref{inductive} show the test accuracy of considered methods in transductive and inductive settings. 
In the inductive setting, NDLS outperforms one of the most competitive baselines -- GraphSAINT by a margin of $1.5\%$ and $0.2\%$ on Flickr and Reddit. 
NDLS exceeds the best GNN model among all considered baselines on each dataset by a margin of $0.2\%$ to $0.8\%$ in the transductive setting.
In addition, we observe that with NDLS-L, the model performance of MLP, SGC, NDLS-F+MLP, and GraphSAGE can be further improved by a large margin. 
For example, the accuracy gain for MLP is $21.8\%$, $11.3\%$, $8.4\%$, and $5.6\%$ on Cora, Citseer, PubMed, and Industry, respectively.
To answer \textbf{Q2}, we replace the gate mechanism in the vanilla DAGNN with NDLS-L and refer to this method as DAGNN (NDLS-L). 
Surprisingly, DAGNN (NDLS-L) achieves at least comparable or (often) higher test accuracy compared with AP-GCN and DAGNN (Gate), and it shows that NDLS-L performs better than the learned mechanism in label smoothing.
Furthermore, by replacing the original graph kernels with NDLS-F, NDLS-F+MLP outperforms both S$^2$GC and GBP on all compared datasets. 
This demonstrates the effectiveness of the proposed NDLS.

\textbf{Training Efficiency.} To answer \textbf{Q3}, we evaluate the efficiency of each method on a real-world industry graph dataset.
Here, we pre-compute the smoothed features of each linear-model-based GNN, and the time for pre-processing is also included in the training time.
Figure~\ref{fig:efficiency-NDLS} illustrates the results on the industry dataset across training time.
Compared with linear-model-based GNNs, we observe that (1) both the coupled and decoupled GNNs require a significantly larger training time; (2) NDLS achieves the best test accuracy while consuming comparable training time with SGC.

\begin{figure*}[tp!]
\centering  
\subfigure[Feature Sparsity]{
\includegraphics[width=0.32\textwidth]{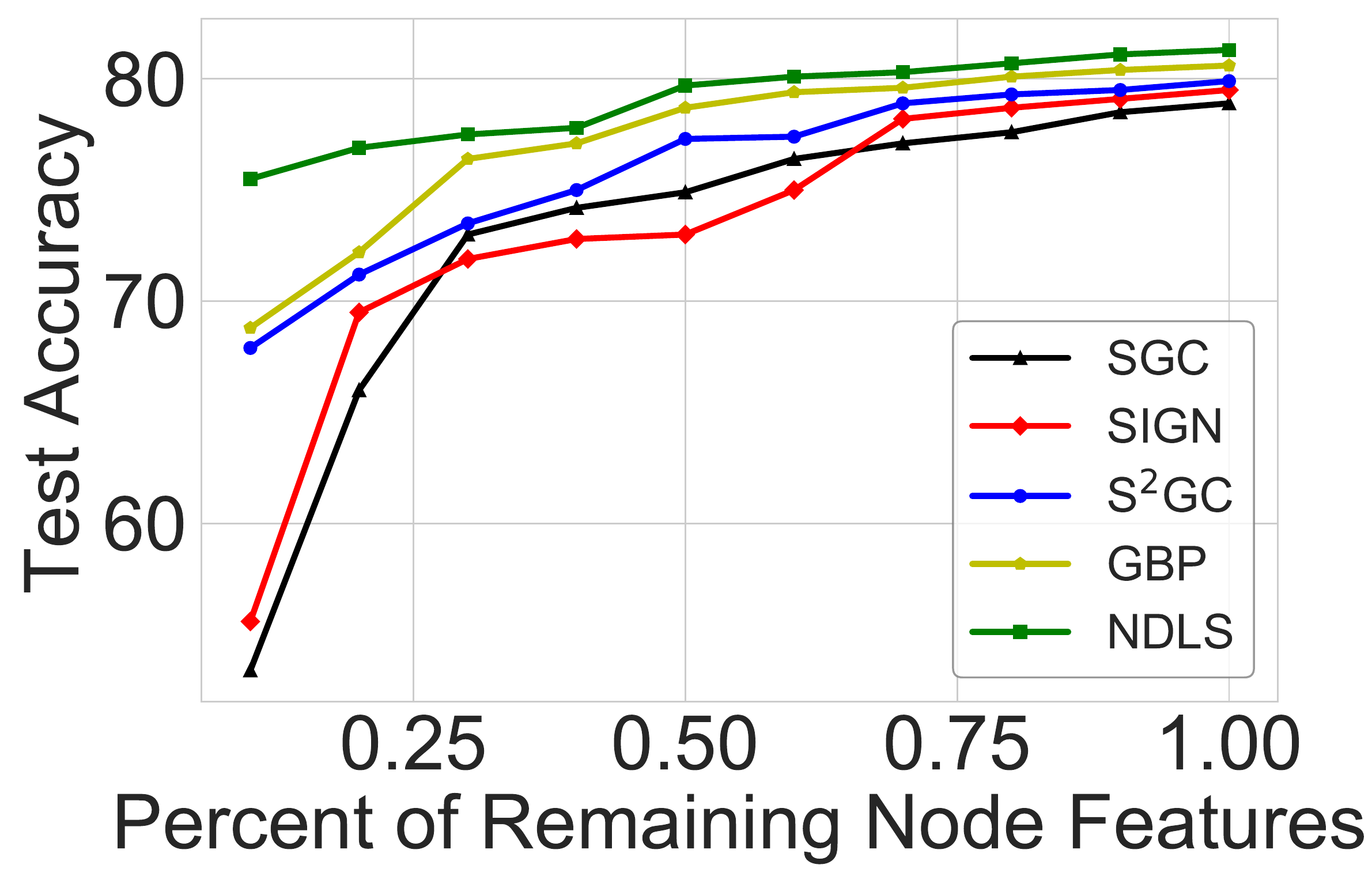}}\hspace{-1mm}
\subfigure[Edge Sparsity]{
\includegraphics[width=0.32\textwidth]{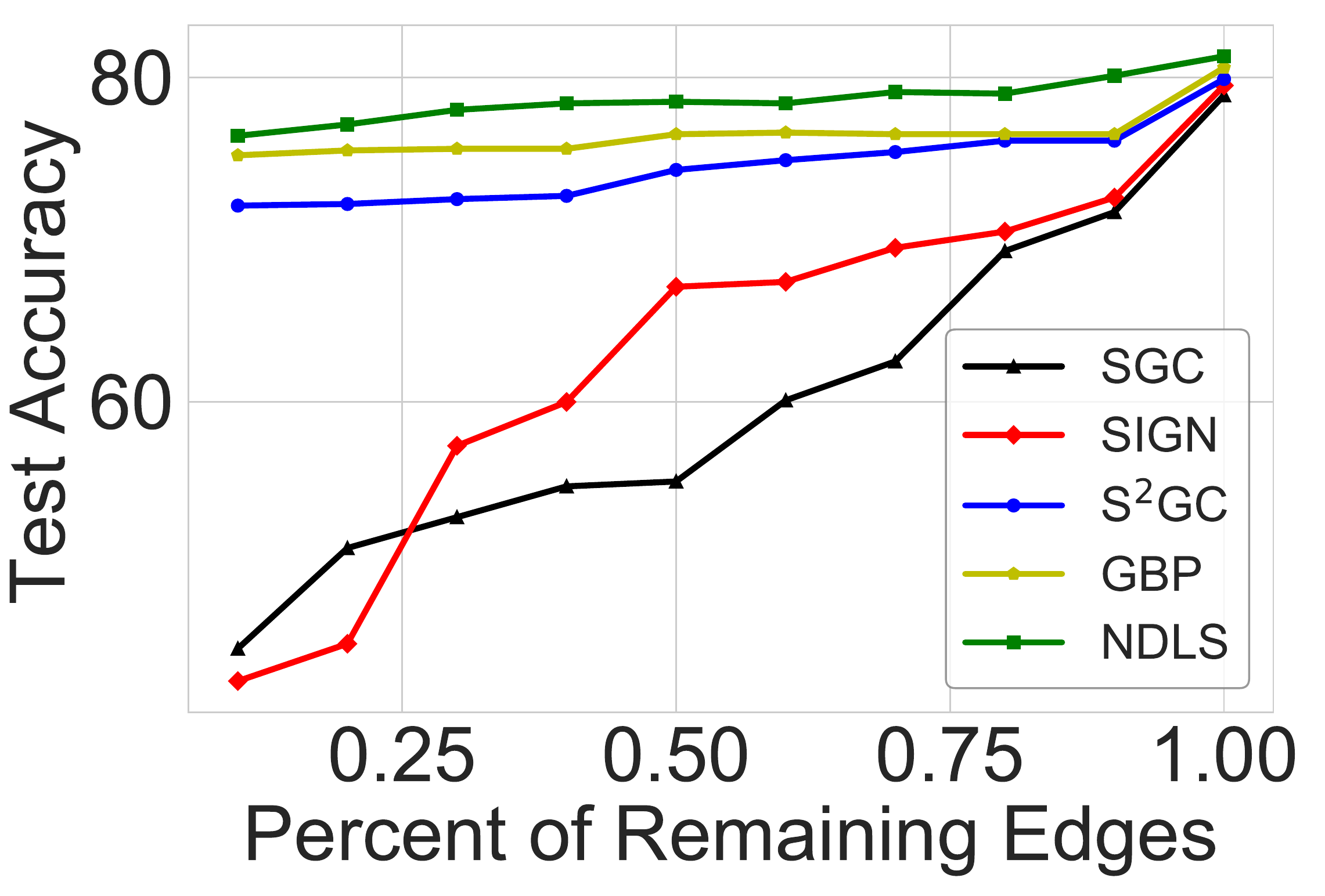}}\hspace{-1mm}
\subfigure[Label Sparsity]{
\includegraphics[width=0.32\textwidth]{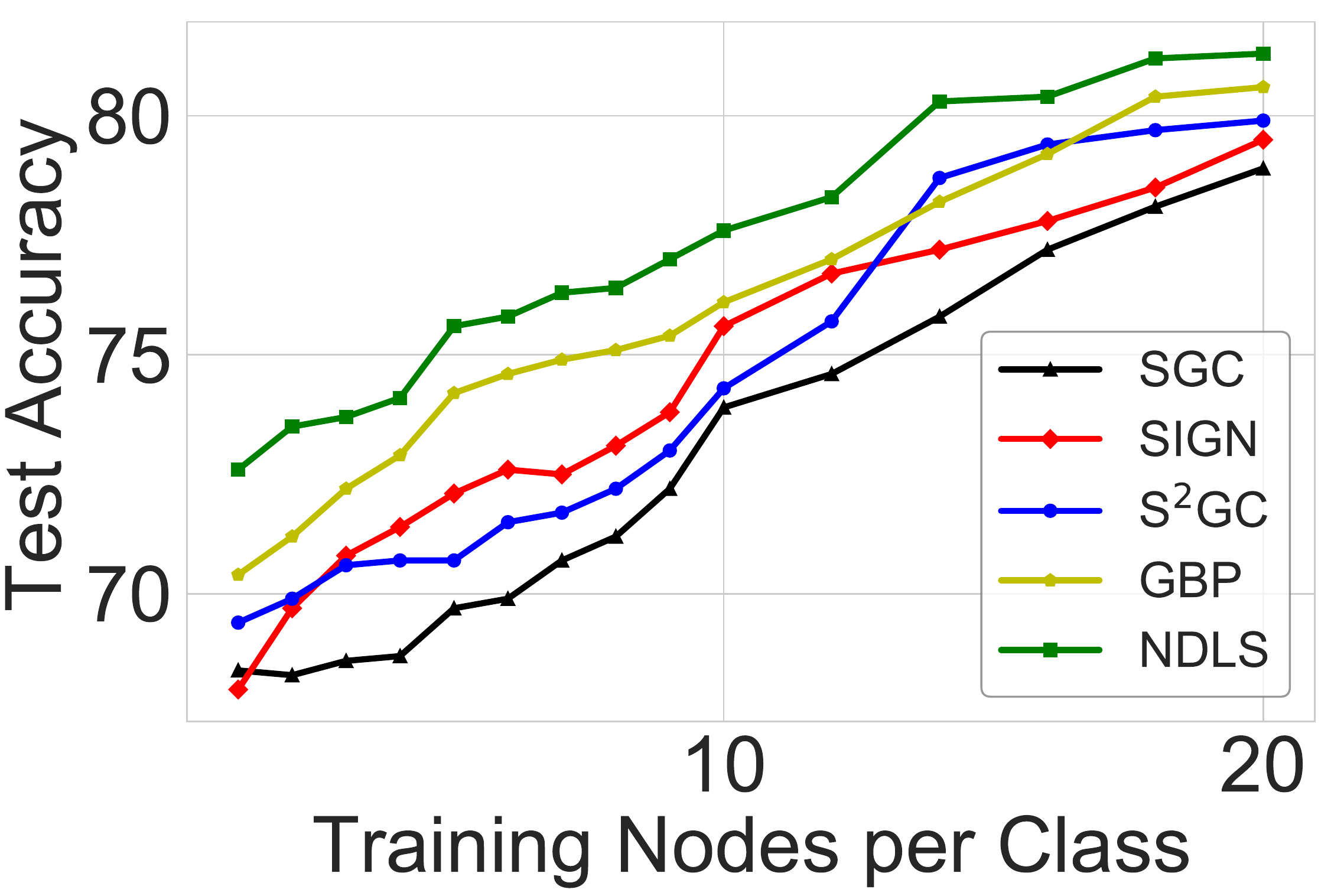}}\hspace{-1mm}
\caption{Test accuracy on PubMed dataset under different levels of feature, edge and label sparsity.}
\label{fig.sparsity}
\vspace{-0.8em}
\end{figure*}

\textbf{Performance on Sparse Graphs.}
To reply \textbf{Q4}, we conduct experiments to test the performance of NDLS on feature, edge, and label sparsity problems. 
For feature sparsity, we assume that the features of unlabeled nodes are partially missing. 
In this scenario, it is necessary to calculate a personalized propagation iteration to ``recover'' each node's feature representation. 
To simulate edge sparsity settings, we randomly remove a fixed percentage of edges from the original graph. 
Besides, we enumerate the number of nodes per class from 1 to 20 in the training set to measure the effectiveness of NDLS given different levels of label sparsity.
The results in Figure~\ref{fig.sparsity} show that NDLS outperforms all considered baselines by a large margin across different levels of feature, edge, and label sparsity, thus demonstrating that our method is more robust to the graph sparsity problem than the linear-model-based GNNs.

\begin{figure*}[tp!]
\centering  
\subfigure[LSI along with the node degree]{
\label{LSI-degree}
\scalebox{0.22}{
\includegraphics{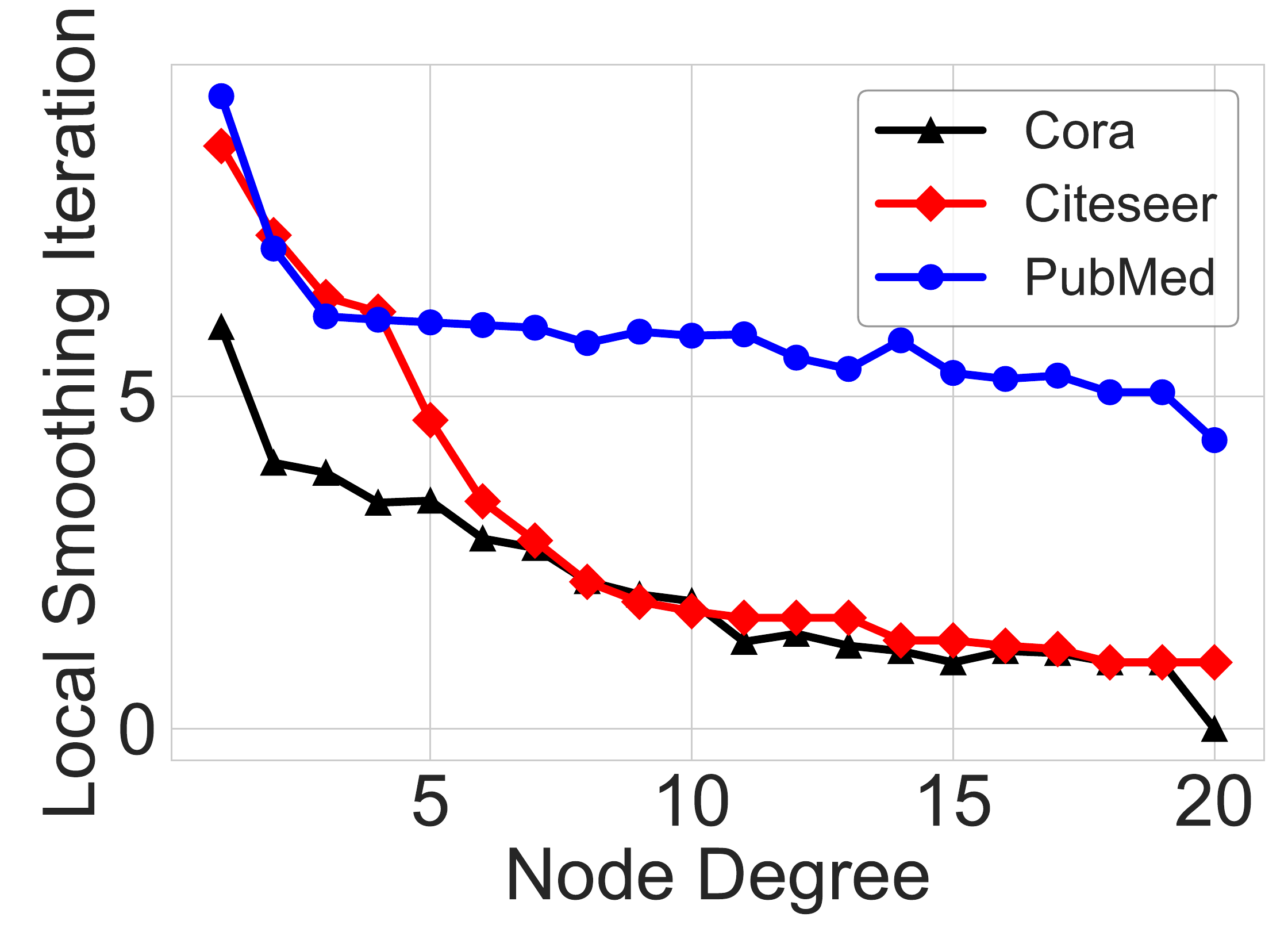}}}\hspace{8mm}
\centering
\subfigure[The visualization of LSI]{
\label{vis}
\scalebox{0.22}{
\includegraphics{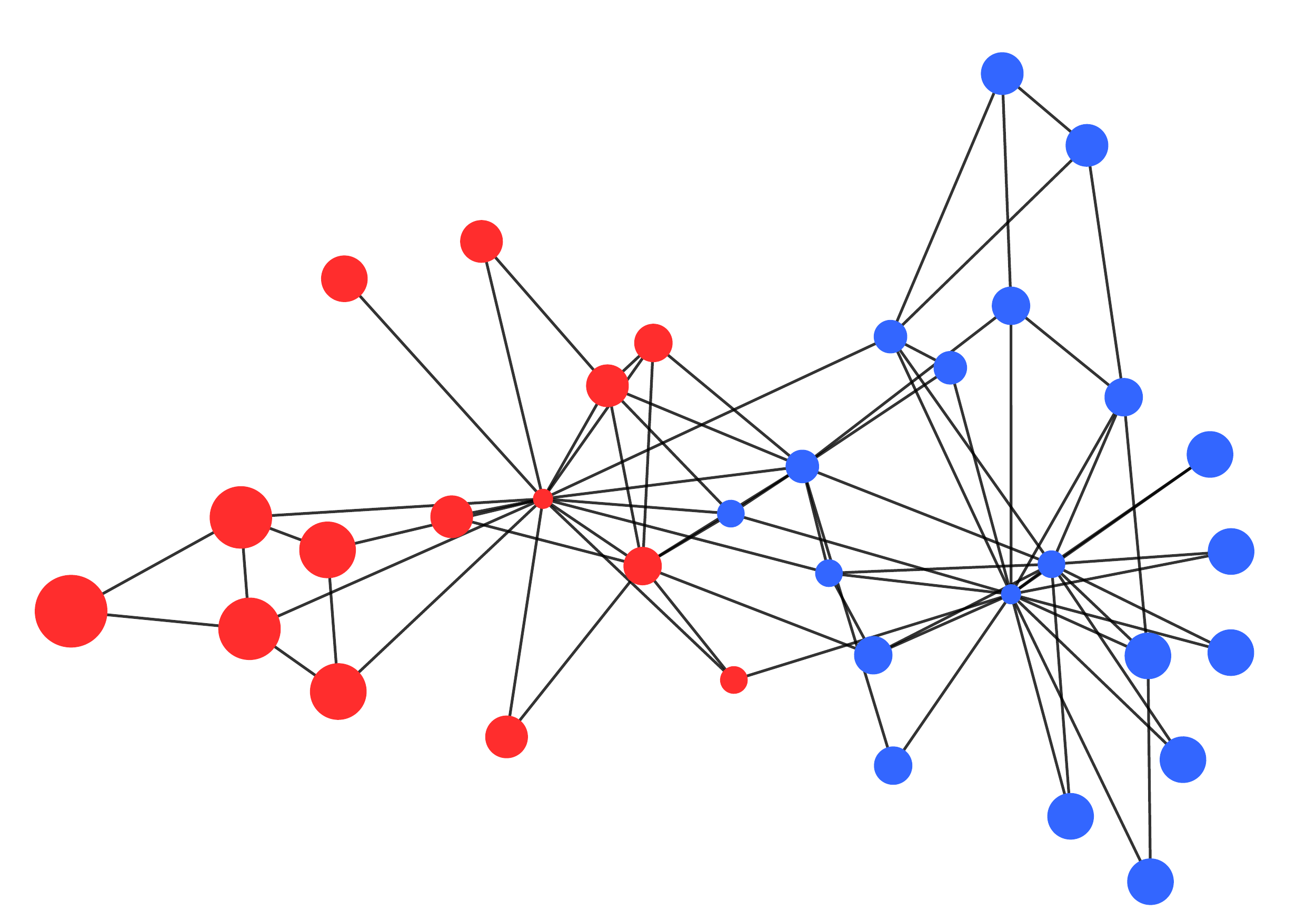}}}
\caption{(Left) LSI distribution along with the node degree in three citation networks. (Right) The visualization of LSI in Zachary's karate club network. Nodes with larger radius have larger LSIs.}
\label{interpretability}
\vspace{-0.8em}
\end{figure*}  

\textbf{Interpretability. }
As mentioned by \textbf{Q1}, we here answer why NDLS is effective.
One theoretical property of LSI is that the value correlates with the node degree negatively. 
We divide nodes into several groups, and each group consists of nodes with the same degree. And then we calculate the average LSI value for each group in the three citation networks respectively.
Figure ~\ref{LSI-degree} depicts that nodes with a higher degree have a smaller LSI, which is consistent with Theorem~\ref{theorem3.1}.
We also use NetworkX~\cite{hagberg2008exploring} to visualize the LSI in Zachary's karate club network~\cite{zachary1977information}. Figure~\ref{vis}, where the radius of each node corresponds to the value of LSI, shows three interesting observations: (1) nodes with a larger degree have smaller LSIs; (2) nodes in the neighbor area have similar LSIs; (3) nodes adjacent to a super-node have smaller LSIs.
The first observation is consistent with Theorem~\ref{theorem3.1}, and the latter two observations show consistency with Theorem~\ref{theorem3.2}.

\section{Conclusion}
In this paper, we present node-dependent local smoothing (NDLS), a simple and scalable graph learning method based on the local smoothing of features and labels. 
NDLS theoretically analyzes what influences the smoothness and gives a bound to guide how to control the extent of smoothness for different nodes.
By setting a node-specific smoothing iteration, each node in NDLS can smooth its feature/label to a local-smoothing state and then help to boost the model performance.
Extensive experiments on seven real-world graph datasets demonstrate the high accuracy, scalability, efficiency, and flexibility of NDLS against the state-of-the-art GNNs. 

\section*{Broader Impact}
NDLS can be employed in areas where graph modeling is the foremost choice, such as citation networks, social networks, chemical compounds, transaction graphs, road networks, etc. The effectiveness of NDLS when improving the predictive performance in those areas may bring a broad range of societal benefits. For example, accurately predicting the malicious accounts on transaction networks can help identify criminal behaviors such as stealing money and money laundering. Prediction on road networks can help avoid traffic overload and save people's time. A significant benefit of NDLS is that it offers a node-dependent solution. However, NDLS faces the risk of information leakage in the smoothed features or labels. In this regard, we encourage researchers to understand the privacy concerns of NDLS and investigate how to mitigate the possible information leakage.

\section*{Acknowledgments and Disclosure of Funding}
This work is supported by NSFC (No. 61832001, 6197200),  Beijing Academy of Artificial Intelligence (BAAI), PKU-Baidu Fund 2019BD006, and PKU-Tencent Joint Research Lab. Zhi Yang and Bin Cui are the corresponding authors.

\bibliographystyle{abbrv}
\bibliography{reference}

\appendix

\section{Appendix}

\subsection{Proofs of Theorems}

We represent the adjacency matrix and the diagonal degree matrix of graph $\mathcal{G}$ by $A$ and $D$ respectively, represent $D$+$I$ and $A$+$I$ by $\tilde{D}$ and $\tilde{A}$. Then we denote $\tilde{D}^{-1}\tilde{A}$ as a transition matrix $P$. Suppose $P$ is connected, which means the graph is connected, for any initial distribution $\pi_{0}$, let 
\begin{equation}
\tilde{\pi}(\pi_{0}) = \lim\limits_{k \to \infty }\pi_{0}P^k,
\end{equation}

then according to ~\cite{dihe2010introduction}, for any initial distribution $\pi_{0}$
\begin{equation}
\tilde{\pi}(\pi_{0})_i = \frac{1}{n}\sum\limits_{j=1}\limits^{n}P_{ji},
\end{equation}
where $\tilde{\pi}_i$ denotes the $i^{th}$ component of $\tilde{\pi}(\pi_{0})$,  and $n$ denotes the number of nodes in graph.
If matrix $P$ is unconnected, we can divide $P$ into connected blocks. Then for each blocks(denoted as $B_{g}$), there always be
\begin{equation}
\tilde{\pi}(\pi_{0})_i = \frac{1}{n_{g}}\sum\limits_{j\in B_{g}}P_{ji}*\sum\limits_{j\in B_{g}}\pi_{0j},
\end{equation}
where $n_{g}$ is the number of nodes in $B_{g}$. To make the proof concise, we will assume matrix $P$ is connected, otherwise we can perform the same operation inside each block. Therefore, $\tilde{\pi}$ is independent to $\pi_{0}$, thus we replace $\tilde{\pi}(\pi_{0})$ by $\tilde{\pi}$.

\begin{definition}[\textbf{Local Mixing Time}] The local mixing time (parameterized by $\epsilon$) with an initial distribution is defined as 
\begin{equation}
T(\pi_{0},\epsilon) = \min\{t:||\tilde{\pi}-\pi_{0}P^{t}||_{2}<\epsilon\},
\end{equation}
where "$||\cdot||_{2}$" symbols two-nor m.
\end{definition}

In order to consider the impact of each node to the others separately, let $\pi_{0}$ = $e_{i}$, where $e_{i}$ is a one-hot vector with the $i^{th}$ component equal to 1, and the other components equal to 0.
According to ~\cite{chung1997spectral} we have lemma \ref{lemma1}.
\begin{lemma}
\label{lemma1}
\begin{equation}
|(e_{i}P^{t})_{j}-\tilde{\pi}_j|\le \sqrt{\frac{\tilde{d_{j}}}{\tilde{d_{i}}}}\lambda_{2}^{t},
\end{equation}
where $\lambda_{2}$ is the second large eigenvalue of $P$ and $\tilde{d_{i}}$ denotes the degree of node $v_{i}$ plus 1 (to include itself).
\begin{equation*}
\tilde{d_{i}} = d_{i}+1,\quad \tilde{d_{j}} = d_{j}+1,
\end{equation*}
\end{lemma}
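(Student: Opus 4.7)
The plan is to reduce this bound on the random walk $P = \tilde{D}^{-1}\tilde{A}$ to a statement about a symmetric matrix whose spectrum is well-understood. The obstruction is that $P$ itself is not symmetric, so I cannot directly exploit an orthonormal eigenbasis. However, $P$ is similar to the normalized adjacency matrix $S = \tilde{D}^{-1/2}\tilde{A}\tilde{D}^{-1/2}$ via $P = \tilde{D}^{-1/2} S\, \tilde{D}^{1/2}$. Hence $P^t = \tilde{D}^{-1/2} S^t\, \tilde{D}^{1/2}$, and in particular
\begin{equation}
(e_i P^t)_j \;=\; (P^t)_{ij} \;=\; \tilde{d}_i^{-1/2}\,(S^t)_{ij}\,\tilde{d}_j^{1/2}.
\end{equation}
Since $S$ is symmetric, it admits an orthonormal eigenbasis $\{\phi_1,\dots,\phi_n\}$ with eigenvalues $1 = \mu_1 \ge \mu_2 \ge \cdots$; the assumption that $P$ is connected guarantees $\mu_1$ is simple, and a direct computation shows $\phi_1(i) = \sqrt{\tilde{d}_i/(2m+n)}$. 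The second largest eigenvalue $\lambda_2$ of $P$ equals $\mu_2$ by the similarity.

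Next, I would expand $(S^t)_{ij} = \sum_{k=1}^n \mu_k^t \phi_k(i)\phi_k(j)$ and isolate the $k=1$ term. Substituting back,
\begin{equation}
(P^t)_{ij} \;=\; \tilde{d}_i^{-1/2}\tilde{d}_j^{1/2}\cdot \frac{\sqrt{\tilde{d}_i\tilde{d}_j}}{2m+n} \;+\; \sqrt{\tfrac{\tilde{d}_j}{\tilde{d}_i}}\sum_{k\ge 2} \mu_k^t \phi_k(i)\phi_k(j) \;=\; \tilde{\pi}_j \;+\; \sqrt{\tfrac{\tilde{d}_j}{\tilde{d}_i}}\sum_{k\ge 2} \mu_k^t \phi_k(i)\phi_k(j),
\end{equation}
using that the leading term reproduces the stationary weight $\tilde{d}_j/(2m+n)$, which agrees with $\tilde{\pi}_j$ as stated in the appendix.

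Finally, I would bound the tail. Taking absolute values, pulling out $|\mu_k|^t \le \lambda_2^t$, and applying Cauchy--Schwarz with respect to the index $k\ge 2$:
\begin{equation}
\Bigl|\sum_{k\ge 2}\mu_k^t\phi_k(i)\phi_k(j)\Bigr| \;\le\; \lambda_2^t \sqrt{\sum_{k\ge 2}\phi_k(i)^2}\,\sqrt{\sum_{k\ge 2}\phi_k(j)^2} \;\le\; \lambda_2^t,
\end{equation}
where the last inequality uses Parseval: $\sum_{k=1}^n \phi_k(\ell)^2 = 1$ for every $\ell$. Combining with the similarity factor yields the desired $\sqrt{\tilde{d}_j/\tilde{d}_i}\,\lambda_2^t$ bound.

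The only genuinely delicate point is justifying that $|\mu_k|\le \lambda_2$ for all $k\ge 2$, i.e.\ that the most negative eigenvalue is not larger in magnitude. For the lazy or self-looped walk on $\tilde{A} = A + I$ this typically holds because adding self loops pushes eigenvalues toward $+1$; if one is being fully rigorous, one should either invoke a lazy-walk variant or restate the bound with $\max(|\mu_2|,|\mu_n|)$ replacing $\lambda_2$. The other small subtlety is the disconnected case, already handled in the appendix by decomposing into connected blocks and running the same argument within each block.
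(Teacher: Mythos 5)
Your proposal is correct, but there is nothing in the paper to compare it against: the paper does not prove this lemma at all --- it simply asserts it ``according to \cite{chung1997spectral}'' (Chung's \emph{Spectral Graph Theory}). What you have written is precisely the standard argument behind that citation: conjugate $P=\tilde{D}^{-1}\tilde{A}$ to the symmetric matrix $S=\tilde{D}^{-1/2}\tilde{A}\tilde{D}^{-1/2}$, peel off the top eigenvector $\phi_1(i)=\sqrt{\tilde{d}_i/(2m+n)}$ (whose contribution reproduces the stationary weight $\tilde{d}_j/(2m+n)$, consistent with Eq.~(2) of the main text; note the appendix's formula $\tilde{\pi}_i=\frac{1}{n}\sum_j P_{ji}$ is only correct for regular graphs, so your version is the right one), and control the remainder by Cauchy--Schwarz and Parseval. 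The two caveats you raise are both real and both inherited by the paper's statement: (i) the bound as written needs $\max_{k\ge 2}|\mu_k|\le\lambda_2$, which fails if the most negative eigenvalue dominates in magnitude, so strictly one should replace $\lambda_2$ by $\max(|\mu_2|,|\mu_n|)$ (or argue that the self-loops in $\tilde{A}$ sufficiently damp the negative end of the spectrum); (ii) connectivity is needed for the simple top eigenvalue, which the paper also assumes and handles blockwise. In short, you have supplied a correct proof for a step the paper outsources to a reference, and you have identified the one place where the paper's phrasing of the spectral gap is imprecise.
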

\begin{theorem}
\label{theorem1}
\begin{equation}
T(e_{i},\epsilon)\le \operatorname{log}_{\lambda_{2}}(\epsilon\sqrt{\frac{\tilde{d_{i}}}{2m+n}}),
\end{equation}
where $m$ and $n$ denote the number of edges and nodes in graph $\mathcal{G}$ separately.
\begin{equation*}
\tilde{d_{i}} = d_{i}+1,
\end{equation*}
\end{theorem}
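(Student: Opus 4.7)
The plan is to bound the two-norm $\|\tilde{\pi}-e_{i}P^{t}\|_{2}$ componentwise using Lemma~\ref{lemma1}, then solve for the smallest $t$ that drives this bound below $\epsilon$.

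First I would square the norm and sum coordinatewise:
\begin{equation*}
\|\tilde{\pi}-e_{i}P^{t}\|_{2}^{2} \;=\; \sum_{j=1}^{n} \bigl((e_{i}P^{t})_{j}-\tilde{\pi}_{j}\bigr)^{2}.
\end{equation*}
Applying Lemma~\ref{lemma1} to each summand gives $((e_{i}P^{t})_{j}-\tilde{\pi}_{j})^{2}\le \tfrac{\tilde{d}_{j}}{\tilde{d}_{i}}\lambda_{2}^{2t}$, so that
\begin{equation*}
\|\tilde{\pi}-e_{i}P^{t}\|_{2}^{2} \;\le\; \frac{\lambda_{2}^{2t}}{\tilde{d}_{i}}\sum_{j=1}^{n}\tilde{d}_{j}.
\end{equation*}

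Next I would use the handshake-style identity $\sum_{j=1}^{n}\tilde{d}_{j} = \sum_{j=1}^{n}(d_{j}+1) = 2m+n$, which folds the graph-size parameters into the bound. Taking square roots yields the clean estimate
\begin{equation*}
\|\tilde{\pi}-e_{i}P^{t}\|_{2} \;\le\; \lambda_{2}^{t}\sqrt{\tfrac{2m+n}{\tilde{d}_{i}}}.
\end{equation*}
It now suffices to choose $t$ so the right-hand side is below $\epsilon$, i.e.\ $\lambda_{2}^{t}<\epsilon\sqrt{\tilde{d}_{i}/(2m+n)}$. Since $\lambda_{2}\le 1$, taking $\log_{\lambda_{2}}$ is order-reversing, which flips the inequality in the desired direction and gives $t\ge \log_{\lambda_{2}}\!\bigl(\epsilon\sqrt{\tilde{d}_{i}/(2m+n)}\bigr)$; the minimum such $t$ is precisely $T(e_{i},\epsilon)$, establishing the claim.

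The only subtle points I anticipate are bookkeeping rather than real obstacles: (i) handling the direction of the inequality correctly under $\log_{\lambda_{2}}$ when $\lambda_{2}\in(0,1)$, and (ii) the strict-vs-weak inequality in the definition of $T$, which at worst contributes a harmless rounding to the next integer and is absorbed into the stated bound. The disconnected case is already reduced to the connected one in the preamble, so the same argument applies blockwise without modification.
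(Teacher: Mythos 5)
Your proposal is correct and follows essentially the same route as the paper's own proof: componentwise application of Lemma~\ref{lemma1}, the handshake identity $\sum_{j}\tilde{d}_{j}=2m+n$, and inversion of the resulting bound $\lambda_{2}^{t}\sqrt{(2m+n)/\tilde{d}_{i}}\le\epsilon$ via $\log_{\lambda_{2}}$. Your explicit remarks on the order-reversal of $\log_{\lambda_{2}}$ for $\lambda_{2}\in(0,1)$ and on the strict-versus-weak inequality are in fact slightly more careful than the paper's presentation.
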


\begin{proof}
~\cite{chung1997spectral} shows that when $\pi_{0}$ = $e_{i}$,
\begin{equation}
|(e_{i}P^{t})_{j}-\tilde{\pi}_j|\le \sqrt{\frac{\tilde{d_{j}}}{\tilde{d_{i}}}}\lambda_{2}^{t},
\end{equation}

where $(e_{i}P^{t})_{j}$ symbols the $j^{th}$ element of $e_{i}P^{t}$. We denote $e_{i}P^{t}$ as $\pi_{i}(t)$, then
\begin{equation}
\begin{split}
||\tilde{\pi}-\pi_{i}(t)||_{2}^{2}
&=\sum\limits_{j=1}\limits^{n}(\tilde{\pi}_{j}-\pi_{i}(t)_{j})^{2}\\
&\le\frac{\sum\limits_{j=1}\limits^{n}\tilde{d_{j}}}{\tilde{d_{i}}}\lambda_{2}^{2t}
=\frac{2m+n}{\tilde{d_{i}}}\lambda_{2}^{2t},
\end{split}
\end{equation}
which means 
\begin{equation}
||\tilde{\pi}-\pi_{i}(t)||_{2}\le\sqrt{\frac{2m+n}{\tilde{d_{i}}}}\lambda_{2}^{t}.
\end{equation}
Now let

\begin{equation*}
\epsilon = \sqrt{\frac{2m+n}{\tilde{d_{i}}}}\lambda_{2}^{t},
\end{equation*}

there exists
\begin{equation*}
T(e_{i},\epsilon)\le \operatorname{log}_{\lambda_{2}}(\epsilon\sqrt{\frac{\tilde{d_{i}}}{2m+n}}).
\end{equation*}
\end{proof}

Next consider the real situation in SGC with $n \times m$-dimension matrix $X(0)$ as input, where $n$ is the number of nodes, $m$ is the number of features.
We apply $P$ as the normalized adjacent matrix.(The definition of $P$ is the same as $\tilde{\textbf{A}}$ in main text). In feature propagation we have
\begin{equation*}
X(t) = P^{t}X(0),
\end{equation*}
Now consider the $h^{th}$ feature of $X$, we define an $n \times n$ influence matrix
\begin{equation}
I_{hij}(t) = \frac{\partial X(t)_{ih}}{\partial X(0)_{jh}},
\end{equation}
Because $I_{h}(k)$ is independent to $h$, we replace $I_{h}(k)$ by $I(k)$, which can be formulated as
\begin{equation}
I(k) = I_{h}(k), \quad \forall h \in \{1,2,..,f\},
\end{equation}
where $f$ symbols the number of features of $X$.
\begin{definition}[\textbf{Local Smoothing Iteration}] The Local Smoothing Iteration (parameterized by $\epsilon$) is defined as 
\begin{equation}
K(i,\epsilon) = \min\{k: ||\tilde{I}_{i}-I_{i}(k)||_{2}<\epsilon\}.
\end{equation}
\end{definition}

According to Theorem \ref{theorem1}, there exists 
\begin{theorem}[\textbf{Theorem 3.1 in main text}]
\label{theorem2}
When the normalized adjacent matrix is $P$, 
\begin{equation}
K(i,\epsilon)\le \operatorname{log}_{\lambda_{2}}(\epsilon\sqrt{\frac{\tilde{d_{i}}}{2m+n}}).
\end{equation}
\end{theorem}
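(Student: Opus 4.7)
The plan is to identify the $i$-th row of the influence matrix with the distribution of a random walk started at node $v_i$. Since $\hat{\mathbf{A}} = \widetilde{\mathbf{D}}^{-1}\widetilde{\mathbf{A}}$ is the transition matrix of a (lazy, via self-loops) random walk on $\mathcal{G}$, we have $I(k)_i = e_i \hat{\mathbf{A}}^k$, viewed as a row vector giving the walk's distribution at time $k$ when started from $v_i$. Similarly, $\tilde{I}_i$ is precisely the stationary distribution $\tilde{\pi}$, whose $j$-th coordinate is $\tilde{d_{j}}/(2m+n)$; this matches Eq.~\eqref{Over} specialized to $r=0$. Thus the quantity $\|\tilde{I}_i - I(k)_i\|_2$ is nothing but the $\ell_2$-distance between the time-$k$ distribution of the random walk and its stationary distribution.

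Next I would invoke the classical spectral bound for reversible Markov chains (cf.\ Lemma~\ref{lemma1} in the appendix, attributed to Chung), which yields the pointwise control
\begin{equation*}
\bigl|(e_{i}\hat{\mathbf{A}}^{k})_{j} - \tilde{\pi}_{j}\bigr| \le \sqrt{\frac{\tilde{d_{j}}}{\tilde{d_{i}}}}\,\lambda_{2}^{k}.
\end{equation*}
This is the only substantive input. It rests on the fact that although $\hat{\mathbf{A}}$ is not symmetric, it is similar to the symmetric matrix $\widetilde{\mathbf{D}}^{-1/2}\widetilde{\mathbf{A}}\widetilde{\mathbf{D}}^{-1/2}$ via conjugation by $\widetilde{\mathbf{D}}^{1/2}$, so its eigenvalues are real and the second-largest one $\lambda_2$ controls the mixing rate. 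This spectral lemma is the main obstacle; everything else is routine bookkeeping.

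Granting the pointwise estimate, I would square and sum over $j$ to upgrade it to an $\ell_2$-bound:
\begin{equation*}
\|\tilde{I}_{i}-I(k)_{i}\|_{2}^{2} \;\le\; \frac{\lambda_{2}^{2k}}{\tilde{d_{i}}}\sum_{j=1}^{n}\tilde{d_{j}} \;=\; \frac{2m+n}{\tilde{d_{i}}}\,\lambda_{2}^{2k},
\end{equation*}
where the last equality uses the handshake identity $\sum_{j}\tilde{d_{j}} = \sum_{j}(d_{j}+1) = 2m+n$. Taking square roots yields $\|\tilde{I}_{i}-I(k)_{i}\|_{2} \le \sqrt{(2m+n)/\tilde{d_{i}}}\,\lambda_{2}^{k}$.

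Finally I would solve the sufficient condition $\sqrt{(2m+n)/\tilde{d_{i}}}\,\lambda_{2}^{k} \le \epsilon$ for $k$. This rearranges to $\lambda_{2}^{k} \le \epsilon\sqrt{\tilde{d_{i}}/(2m+n)}$; since $\lambda_{2} \le 1$, the map $t \mapsto \log_{\lambda_{2}}(t)$ is decreasing, so the inequality flips and becomes $k \ge \log_{\lambda_{2}}\!\bigl(\epsilon\sqrt{\tilde{d_{i}}/(2m+n)}\bigr)$. By the definition of $K(i,\epsilon)$ as the \emph{minimum} iteration count achieving the $\epsilon$-threshold, this ceiling is precisely the claimed upper bound, completing the proof.
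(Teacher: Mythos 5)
Your proposal is correct and follows essentially the same route as the paper's own proof: both rest on the spectral mixing bound $|(e_{i}P^{k})_{j}-\tilde{\pi}_{j}|\le\sqrt{\tilde{d_{j}}/\tilde{d_{i}}}\,\lambda_{2}^{k}$ from Chung, upgrade it to an $\ell_2$ bound via the handshake identity $\sum_{j}\tilde{d_{j}}=2m+n$, and then solve $\sqrt{(2m+n)/\tilde{d_{i}}}\,\lambda_{2}^{k}\le\epsilon$ for $k$. Your added justification of the spectral lemma (similarity of $\widetilde{\mathbf{D}}^{-1}\widetilde{\mathbf{A}}$ to its symmetric normalization) and your explicit handling of the sign flip when taking $\log_{\lambda_{2}}$ are slightly more careful than the paper's write-up, but the argument is the same.
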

\begin{proof}
From equation (9) we can derive that
\begin{equation*}
||e_{i}P^{\infty}-e_{i}P^k||_{2}\le\sqrt{\frac{2m+n}{\tilde{d_{i}}}}\lambda_{2}^{k}.
\end{equation*}
Because 
\begin{equation*}
I_{i}(k) = P^{k}_{i} = e_{i}P^{k} \quad I_{i}(\infty) = P^{\infty}_{i} = e_{i}P^{\infty},
\end{equation*}
we have
\begin{equation*}
||I_{i}(\infty)-I_{i}(k)||_{2}\le\sqrt{\frac{2m+n}{\tilde{d_{i}}}}\lambda_{2}^{k}.
\end{equation*}

Now let
\begin{equation*}
\epsilon = \sqrt{\frac{2m+n}{\tilde{d_{i}}}}\lambda_{2}^{k},
\end{equation*}
there exists
\begin{equation*}
K(i,\epsilon)\le \operatorname{log}_{\lambda_{2}}(\epsilon\sqrt{\frac{\tilde{d_{i}}}{2m+n}}).
\end{equation*}
\end{proof}
Therefore, we expand Theorem \ref{theorem2} to the propagation in SGC or our method. What is remarkable, Theorem \ref{theorem2} requires $P$, which is equal to $\tilde{D}^{-1}\tilde{A}$ as the normalized adjacent matrix. 

From Theorem \ref{theorem2} we can conclude that the node which has a lager degree may need more steps to propagate. At the same time, we have another bond of local mixing time as following.

\begin{theorem}
\label{theorem3}
For each node $v_{i}$ in graph $\mathcal{G}$, there always exits
\begin{equation}
T(e_{i},\epsilon)\le \max\{T(e_{j},\epsilon),j\in N(i)\}+1.
\end{equation}
where N(i) is the set of node $v_{i}$'s neighbours.
\end{theorem}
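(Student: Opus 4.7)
The plan is to exploit the fact that one application of $P = \tilde{D}^{-1}\tilde{A}$ spreads the Dirac mass at $v_i$ uniformly over the closed neighborhood $\tilde{N}(i) := N(i) \cup \{i\}$; the self-loop is present because $\tilde{A} = A + I$. Concretely, $e_i P = \tilde{d}_i^{-1}\sum_{j \in \tilde{N}(i)} e_j$, so right-multiplying by $P^t$ yields the semigroup identity
\[
e_i P^{t+1} = \frac{1}{\tilde{d}_i}\sum_{j \in \tilde{N}(i)} e_j P^t,
\]
which expresses the distribution at time $t+1$ started from $v_i$ as a uniform convex combination of the distributions at time $t$ started from each neighbor of $v_i$ (including $v_i$ itself via the self-loop).

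From here the key step is to subtract the fixed point $\tilde{\pi}$ term-by-term inside the sum (using $\tilde{\pi} P = \tilde{\pi}$) and apply the triangle inequality for $\|\cdot\|_2$:
\[
\|e_i P^{t+1} - \tilde{\pi}\|_2 \le \frac{1}{\tilde{d}_i}\sum_{j \in \tilde{N}(i)} \|e_j P^t - \tilde{\pi}\|_2 \le \max_{j \in \tilde{N}(i)} \|e_j P^t - \tilde{\pi}\|_2.
\]
Setting $t^\star := \max\{T(e_j,\epsilon) : j \in N(i)\}$ and invoking the definition of $T(e_j,\epsilon)$ gives $\|e_j P^{t^\star} - \tilde{\pi}\|_2 < \epsilon$ for every $j \in N(i)$. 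Interpreting $N(i)$ with the natural self-loop convention (so that the self-loop at $v_i$ in $\tilde{A}$ places $v_i$ inside $N(i)$), the maximum on the right is over $\tilde{N}(i)$ and the bound immediately yields $\|e_i P^{t^\star+1} - \tilde{\pi}\|_2 < \epsilon$, hence $T(e_i,\epsilon) \le t^\star + 1$, which is the claim.

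The step I expect to require the most care is the bookkeeping around the self-loop and the precise reading of $N(i)$. The convex-combination identity itself is essentially mechanical once one recognizes that $e_i P$ is a uniform law on $\tilde{N}(i)$, but the appearance of $i$ inside $\tilde{N}(i)$ leaves a residual $\tilde{d}_i^{-1}\|e_i P^{t^\star} - \tilde{\pi}\|_2$ term that is vacuous under the closed-neighborhood reading but must otherwise be controlled. Such control can be supplied by the spectral estimate $\|e_i P^t - \tilde{\pi}\|_2 \le \sqrt{(2m+n)/\tilde{d}_i}\,\lambda_{2}^{t}$ established in the proof of Theorem~\ref{theorem3.1}, which is monotonically non-increasing in $t$ since $|\lambda_2|\le 1$; plugging this into the expanded recursion
\[
\|e_i P^{t^\star+1} - \tilde{\pi}\|_2 \le \tfrac{1}{\tilde{d}_i}\|e_i P^{t^\star} - \tilde{\pi}\|_2 + \tfrac{d_i}{\tilde{d}_i}\,\epsilon
\]
lets one close the argument without appeal to the self-loop convention and confirms that the difference between neighboring LSIs is at most one.
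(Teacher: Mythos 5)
Your proof follows essentially the same route as the paper's: expand $e_iP^{t+1}$ as the uniform convex combination $\tilde{d}_i^{-1}\sum_{j}e_jP^{t}$ over the neighborhood, compare term-by-term against the stationary distribution via the triangle inequality, and bound by the maximum over neighbors, so that $t^{\star}+1$ steps suffice. You are in fact more careful than the paper's own write-up (which states this step as an \emph{equality} normalized by $|N(i)|$ rather than as a triangle-inequality bound normalized by $\tilde{d}_i$, and silently ignores the self-loop term); the only loose end is your closing remark, since the spectral estimate $\sqrt{(2m+n)/\tilde{d}_i}\,\lambda_{2}^{t^{\star}}$ need not itself be below $\epsilon$ at $t^{\star}$, so the residual self-term is genuinely disposed of only under the closed-neighborhood (self-loop) reading — under which, as you note, it is vacuous.
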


\begin{proof}
\begin{equation}
\begin{split}
\||\tilde{\pi}-e_{i}P^{t+1}||_{2}
&=\frac{1}{|N(i)|}\sum\limits_{j\in N(i)}||\tilde{\pi}-e_{j}P^{t}||_2\\
&\le \max\limits_{j\in N(i)}||\tilde{\pi}-e_{j}P^{t}||_2.
\end{split}
\end{equation}
Therefore, when 
\begin{equation*}
\max\limits_{j\in N(i)}||\tilde{\pi}-e_{j}P^{t}||_2\le\epsilon,
\end{equation*}
there exists 
\begin{equation*}
||\tilde{\pi}-e_{i}P^{t+1}||_{2}\le\epsilon.
\end{equation*}
Thus we can derive that 
\begin{equation*}
T(e_{i},\epsilon)\le \max\{T(e_{j},\epsilon),j\in N(i)\}+1.
\end{equation*}
\end{proof}

As we extend Theorem \ref{theorem1} to Theorem \ref{theorem2}, according to Theorem \ref{theorem3}, there always be
\begin{theorem}[\textbf{Theorem 3.2 in main text}]
For each node $v_{i}$ in graph $\mathcal{G}$, there always exits
\begin{equation}
K(i,\epsilon)\le \max\{K(j,\epsilon),j\in N(i)\}+1.
\end{equation}
\end{theorem}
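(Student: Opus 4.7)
The plan is to prove the bound by expressing one extra propagation step as an average over the closed neighborhood, and then applying the triangle inequality in the $\ell_2$-norm. Since the working assumption from Theorem~\ref{theorem3.1} carries over (i.e., $r=0$ and $\hat{\mathbf{A}} = \widetilde{\mathbf{D}}^{-1}\widetilde{\mathbf{A}}$, so $I(k)=\hat{\mathbf{A}}^k$), the row vector $e_i \hat{\mathbf{A}}$ has entries $1/\tilde d_i$ on the closed neighborhood $\widetilde N(i) = N(i)\cup\{i\}$ and zero elsewhere. Consequently,
\begin{equation*}
I(k+1)_i \;=\; e_i \hat{\mathbf{A}}^{k+1} \;=\; (e_i\hat{\mathbf{A}})\hat{\mathbf{A}}^k \;=\; \frac{1}{\tilde d_i}\sum_{j\in \widetilde N(i)} I(k)_j .
\end{equation*}

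Next I would observe that with $r=0$ the stationary matrix $\tilde I = \hat{\mathbf{A}}^\infty$ has all rows equal: from Eq.~\eqref{Over} the entry $\hat{\mathbf{A}}^\infty_{i,j}=\tilde d_j/(2m+n)$ does not depend on $i$. Call this common row $\tilde\pi$. Then $\tilde I_i = \tilde\pi = \frac{1}{\tilde d_i}\sum_{j\in\widetilde N(i)}\tilde\pi = \frac{1}{\tilde d_i}\sum_{j\in\widetilde N(i)}\tilde I_j$, which allows me to write the error at step $k+1$ as a convex combination of step-$k$ errors at the neighbors:
\begin{equation*}
\tilde I_i - I(k+1)_i \;=\; \frac{1}{\tilde d_i}\sum_{j\in \widetilde N(i)} \bigl(\tilde I_j - I(k)_j\bigr) .
\end{equation*}

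Applying the triangle inequality in $\|\cdot\|_2$ and bounding the mean by the maximum yields
\begin{equation*}
\|\tilde I_i - I(k+1)_i\|_2 \;\le\; \frac{1}{\tilde d_i}\sum_{j\in\widetilde N(i)} \|\tilde I_j - I(k)_j\|_2 \;\le\; \max_{j\in \widetilde N(i)} \|\tilde I_j - I(k)_j\|_2 .
\end{equation*}
Setting $k = \max\{K(j,\epsilon): j\in N(i)\}$ (and noting the self-loop term $j=i$ contributes trivially, since if it were already below $\epsilon$ we would be done), every term on the right is strictly less than $\epsilon$, so the left side is less than $\epsilon$ as well. By the definition of LSI, this gives $K(i,\epsilon)\le k+1 = \max\{K(j,\epsilon): j\in N(i)\}+1$, which is the claim.

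The only subtle point will be the self-loop bookkeeping in the neighborhood: because $\widetilde{\mathbf{A}}$ includes self-loops, the average is over $\widetilde N(i)$ rather than $N(i)$, and I need to argue that including the $j=i$ term inside the maximum does not weaken the bound (either $i$ is already $\epsilon$-close, in which case the inequality holds automatically, or the maximum is attained at some $j\in N(i)$). Beyond that, the argument is purely an averaging/triangle-inequality step and should be routine once the convex-combination identity is in place.
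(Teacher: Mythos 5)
Your route is the same one the paper takes: expand one propagation step as an average of the step-$k$ influence rows over a neighborhood, use that the stationary row is a fixed point of this averaging, apply the triangle inequality, and bound the average by the maximum. In fact your first two displays are the \emph{corrected} form of what the paper writes: the paper's proof asserts $\|\tilde{\pi}-e_iP^{t+1}\|_2=\frac{1}{|N(i)|}\sum_{j\in N(i)}\|\tilde{\pi}-e_jP^{t}\|_2$, which states as an equality what is only a triangle inequality, and which averages over the open neighborhood $N(i)$ with weight $1/|N(i)|$ even though $P=\widetilde{\mathbf{D}}^{-1}\widetilde{\mathbf{A}}$ contains self-loops; the correct identity is exactly the one you wrote, over $\widetilde N(i)=N(i)\cup\{i\}$ with weight $1/\tilde d_i$.

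Being honest about the self-loop, however, exposes a gap that your parenthetical case analysis does not close. Write $f_j(k)=\|\tilde I_j-I(k)_j\|_2$ and set $k=\max\{K(j,\epsilon):j\in N(i)\}$. Your dichotomy --- ``either $i$ is already $\epsilon$-close, or the maximum is attained at some $j\in N(i)$'' --- is not exhaustive. The problematic case is precisely $f_i(k)\ge\epsilon$ while $f_j(k)<\epsilon$ for all $j\in N(i)$: then the maximum over $\widetilde N(i)$ is attained at $j=i$ and equals $f_i(k)\ge\epsilon$, so your bound yields only $f_i(k+1)\le f_i(k)$, not $f_i(k+1)<\epsilon$. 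Even the sharper convex-combination estimate $f_i(k+1)\le\frac{1}{\tilde d_i}f_i(k)+\frac{d_i}{\tilde d_i}\max_{j\in N(i)}f_j(k)$ remains at or above $\epsilon$ whenever $f_i(k)\ge\epsilon$; it shows only geometric contraction of $f_i$ toward the neighbors' level, which would give a bound with an additional additive term depending on $f_i(k)/\epsilon$ rather than the claimed $+1$. (The paper's proof does not confront this case because it silently drops the $j=i$ term from the average.) A second, shared, unstated assumption is monotonicity: to conclude that every $j\in N(i)$ satisfies $f_j(k)<\epsilon$ at $k=\max_j K(j,\epsilon)$ you need $f_j$ to stay below $\epsilon$ for all iterations after $K(j,\epsilon)$, which neither you nor the paper establishes (the definition of $K(j,\epsilon)$ only guarantees it at the first such iteration). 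As written, your argument reproduces the paper's strategy more carefully but does not actually establish the stated $+1$ bound.
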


\subsection{Results with More Base Models}

Our proposed NDLS consists of three stages: (1) feature smoothing with NDLS (NDLS-F), (2) model training with smoothed features, and (3) label smoothing with NDLS (NDLS-L).
In stage (2), the default option of the base model is a Multilayer Perceptron (MLP).
Besides MLP, many other models can also be used in stage (2) to generate soft labels.
To verify it, here we replace the MLP in stage (2) with popular machine learning models Random Forest~\cite{liaw2002classification} and XGBoost~\cite{chen2016xgboost}, and measure their node classification performance on PubMed dataset.
The experiment results are shown in Table~\ref{transductive} where Random Forest and XGBoost are abbreviated as {\em RF} and {\em XGB} respectively.

Compared to the vanilla model, both Random Forest and XGBoost achieve significant performance gain with the addition of our NDLS. 
With the help of NDLS, Random Forest and XGBoost outperforms their base models by $6.1\%$ and $7.5\%$ respectively.
From Table~\ref{transductive}, we can observe that both NDLS-F and NDLS-L can contribute great performance boost to the base model, where the gains are at least $5\%$.
When all equipped with both NDLS-F and NDLS-L, XGBoost beat the default MLP, achieving a test accuracy of $81.6\%$. 
Although Random Forest -- $80.5\%$ -- cannot outperform the other two models, it is still a competitive model. 

The above experiment demonstrates that the base model selection in stage (2) is rather flexible in our NDLS. Both traditional machine learning methods and neural networks are promising candidates in the proposed method.


\begin{table}[tpb!]
\caption{Results of different base models on PubMed.}
\vspace{-2mm}
\centering
{
\noindent
\renewcommand{\multirowsetup}{\centering}
\resizebox{0.7\linewidth}{!}{
\begin{tabular}{ccccccccccc}
\toprule
\textbf{Base Models}&\textbf{Models}&\textbf{Accuracy}& \textbf{Gain}\\
\midrule
\multirowcell{4}{MLP}
&Base &72.7$\pm$0.6&-\\
&+ NDLS-F& 81.1$\pm$0.6&+ 8.4\\ 
&+ NDLS-L&81.1$\pm$0.6&+ 8.4\\
&+ NDLS (both)&\textbf{81.4$\pm$0.4}&+ 8.7\\
\midrule
\multirowcell{4}{RF}
&Base &74.4$\pm$0.2&-\\
&+ NDLS-F&80.3$\pm$0.1&+ 5.9\\ 
&+ NDLS-L&80.0$\pm$0.2&+ 5.6\\
&+ NDLS (both)&\textbf{80.5$\pm$0.4}&+ 6.1\\
\midrule
\multirowcell{4}{XGB}
&Base &74.1$\pm$0.2&-\\
&+ NDLS-F &81.0$\pm$0.3&+ 6.9\\ 
&+ NDLS-L&79.8$\pm$0.2&+ 5.7\\
&+ NDLS (both)&\textbf{81.6$\pm$0.3}&+ 7.5\\
\bottomrule
\end{tabular}}}
\label{transductive2}
\end{table}

\subsection{Dataset Description}

\textbf{Cora}, \textbf{Citeseer}, and \textbf{Pubmed}\footnote{https://github.com/tkipf/gcn/tree/master/gcn/data} are three popular citation network datasets, and we follow the public training/validation/test split in GCN~\cite{kipf2016semi}.
In these three networks, papers from different topics are considered as nodes, and the edges are citations among the papers.  The node attributes are binary word vectors, and class labels are the topics papers belong to.

\noindent\textbf{Reddit} is a social network dataset derived from the community structure of numerous Reddit posts. It is a well-known inductive training dataset, and the training/validation/test split in our experiment is the same as the one in GraphSAGE~\cite{hamilton2017inductive}. 

\noindent\textbf{Flickr} originates from NUS-wide~\footnote{http://lms.comp.nus.edu.sg/research/NUS-WIDE.html} and contains different types of images based on the descriptions and common properties of online images. The public version of Reddit and Flickr provided by GraphSAINT\footnote{https://github.com/GraphSAINT/GraphSAINT} is used in our paper.

\noindent\textbf{Industry} is a short-form video graph, collected from a real-world mobile application from our industrial cooperative enterprise. 
 We sampled 1,000,000 users and videos from the app, and treat these items as nodes. The edges in the generated bipartite graph represent that the user clicks the short-form videos. 
 Each user has 64 features and the target is to category these short-form videos into 253 different classes. 
 
 \noindent\textbf{ogbn-papers100M} is a directed citation graph of 111 million papers indexed by MAG~\cite{wang2020microsoft}. Among its node set, approximately 1.5 million of them are arXiv papers, each of which is manually labeled with one of arXiv’s subject areas. Currently, this dataset is much larger than any existing public node classification datasets.
 
\begin{table}[tpb!]
\caption{URLs of baseline codes.}
\vspace{-2mm}
\centering
{
\noindent
\renewcommand{\multirowsetup}{\centering}
\resizebox{0.9\linewidth}{!}{
\begin{tabular}{cccc}
\toprule
\textbf{Type}&\textbf{Baselines}&\textbf{URLs}\\
\midrule
\multirowcell{2}{Coupled}&
GCN &https://github.com/rusty1s/pytorch\_geometric \\
&GAT &https://github.com/rusty1s/pytorch\_geometric \\
\midrule
\multirowcell{4}{Decoupled}
&APPNP &https://github.com/rusty1s/pytorch\_geometric \\
&PPRGo &https://github.com/TUM-DAML/pprgo\_pytorch\\
&AP-GCN & https://github.com/spindro/AP-GCN\\
&DAGNN & https://github.com/divelab/DeeperGNN\\
\midrule
\multirowcell{4}{Sampling}
&GraphSAGE & https://github.com/williamleif/GraphSAGE\\
&GraphSAINT &https://github.com/GraphSAINT/GraphSAINT\\
&FastGCN & https://github.com/matenure/FastGCN\\
&Cluster-GCN& https://github.com/benedekrozemberczki/ClusterGCN\\
\midrule
\multirowcell{5}{Linear}
&SGC &https://github.com/Tiiiger/SGC\\
&SIGN & https://github.com/twitter-research/sign\\
&S$^2$GC& https://github.com/allenhaozhu/SSGC\\
&GBP& https://github.com/chennnM/GBP\\
&NDLS&https://github.com/zwt233/NDLS\\
\bottomrule
\end{tabular}}}
\label{url}
\end{table} 
 
\subsection{Compared Baselines}
The main characteristic of all baselines are listed below: 
\begin{itemize}
    \item \textbf{GCN} ~\cite{kipf2016semi}: GCN is a novel and efficient method for semi-supervised classification on graph-structured data.
    \item \textbf{GAT} ~\cite{velivckovic2017graph}: GAT leverages masked self-attention layers to specify different weights to different nodes in a neighborhood, thus better represent graph information.
    \item \textbf{JK-Net} ~\cite{xu2018representation}: JK-Net is a flexible network embedding method that could gather different neighborhood ranges to enable better structure-aware representation. 
    \item \textbf{APPNP} ~\cite{klicpera2018predict}: APPNP uses the relationship between graph convolution networks (GCN) and PageRank to derive improved node representations.
    \item \textbf{AP-GCN} ~\cite{spinelli2020adaptive}: AP-GCN uses a halting unit to decide a receptive range of a given node. 
    \item \textbf{DAGNN} ~\cite{liu2020towards}: DAGNN proposes to decouple the representation transformation and propagation, and show that deep graph neural networks without this entanglement can leverage large receptive fields without suffering from performance deterioration.
    \item \textbf{PPRGo} ~\cite{bojchevski2020scaling}: utilizes an efficient approximation of information diffusion in GNNs resulting in significant speed gains while maintaining state-of-the-art prediction performance.
    \item \textbf{GraphSAGE} ~\cite{hamilton2017inductive}: GraphSAGE is an inductive framework that leverages node attribute information to efficiently generate representations on previously unseen data.
    \item \textbf{FastGCN} ~\cite{chen2018fastgcn}: FastGCN interprets graph convolutions as integral transforms of embedding functions under probability measures.
    \item \textbf{Cluster-GCN} ~\cite{chiang2019cluster}: Cluster-GCN is a novel GCN algorithm that is suitable for SGD-based training by exploiting the graph clustering structure.
    \item \textbf{GraphSAINT} ~\cite{DBLP:conf/iclr/ZengZSKP20}: GraphSAINT constructs mini-batches by sampling the training graph, rather than the nodes or edges across GCN layers.
    \item \textbf{SGC} ~\cite{wu2019simplifying}: SGC simplifies GCN by removing nonlinearities and collapsing weight matrices between consecutive layers. 
    \item \textbf{SIGN} ~\cite{rossi2020sign}: SIGN is an efficient and scalable graph embedding method that sidesteps graph sampling in GCN and uses different local graph operators to support different tasks. 
    \item \textbf{S$^2$GC} ~\cite{zhu2021simple}: S$^2$GC uses a modified Markov Diffusion Kernel to derive a variant of GCN, and it can be used as a trade-off of low-pass and high-pass filter which captures the global and local contexts of each node.
    \item \textbf{GBP} ~\cite{chen2020scalable}: GBP utilizes a localized bidirectional propagation process from both the feature vectors and the training/testing nodes
\end{itemize}
Table~\ref{url} summarizes the github URLs of the compared baselines. Following the original paper, we implement JK-Net by ourself since there is no official version available.

\subsection{Implementation Details}
\paragraph{Hyperparameter details.}
In stage (1), when computing the Local Smoothing Iteration, the maximal value of $k$ in equation (12) is set to 200 and the optimal $\epsilon$ value is get by means of a grid search from \{0.01, 0.03, 0.05\}.
In stage (2), we use a simple two-layer MLP to get the base prediction. The hidden size is set to 64 in small datasets -- Cora, Citeseer and Pubmed. While in larger datasets -- Flicker, Reddit, Industry and ogbn-papers100M, the hidden size is set to 256.
As for the dropout percentage and the learning rate, we use a grid search from \{0.2, 0.4, 0.6, 0.8\} and \{0.1, 0.01, 0.001\} respectively.
In stage (3), during the computation of the Local Smoothing Iteration, the maximal value of $k$ is set to 40. The optimal value of $\epsilon$ is obtained through the same process in stage (1).

\paragraph{Implementation environment.}
The experiments are conducted on a machine with Intel(R) Xeon(R) Gold 5120 CPU @ 2.20GHz, and a single NVIDIA TITAN RTX GPU with 24GB memory.
The operating system of the machine is Ubuntu 16.04.
As for software versions, we use Python 3.6, Pytorch 1.7.1 and CUDA 10.1.


\subsection{Comparison and Combination with Correct\&Smooth}
Similar to our NDLS-L, Correct and Smooth (C\&S) also applies post-processing on the model prediction. Therefore, we compare NDLS-L with C\&S below.

\begin{table}[tbp!]
\centering
{
\noindent
\renewcommand{\multirowsetup}{\centering}
\resizebox{0.8\linewidth}{!}{
\begin{tabular}{c|cccc}
\toprule
\textbf{Methods}&\textbf{Cora}& \textbf{Citeseer}&\textbf{PubMed}&\textbf{ogbn-papers100M}\\
\midrule
MLP+C\&S & 87.2 & 76.6 & 88.3 & 63.9 \\
MLP+NDLS-L & \textbf{88.1} & \textbf{78.3} & \textbf{88.5} & \textbf{64.6} \\
\bottomrule
\end{tabular}}}
\caption{Performance comparison between C\&S and NDLS-L} 
\label{vs_ndlsl}
\end{table}

\begin{table}[tbp!]
\centering
{
\noindent
\renewcommand{\multirowsetup}{\centering}
\resizebox{0.75\linewidth}{!}{
\begin{tabular}{c|cccccc}
\toprule
\textbf{Methods}&\textbf{2\%}& \textbf{5\%}&\textbf{10\%}&\textbf{20\%}&\textbf{40\%}&\textbf{60\%}\\
\midrule
MLP+S & \underline{63.1} & \underline{77.8} & 82.6 & 84.2 & 85.4 & 86.4 \\ 
MLP+C\&S & 62.8 & 76.7 & \underline{82.8} & \underline{84.9} & \underline{86.4} & \underline{87.2} \\
MLP+NDLS-L & \textbf{77.4} & \textbf{83.9} & \textbf{85.3} & \textbf{86.5} & \textbf{87.6} & \textbf{88.1} \\
\bottomrule
\end{tabular}}}
\caption{Performance comparison under varied label rate on the Cora dataset.} 
\label{diff_label_rate}
\end{table}

\textbf{Adaptivity to node.} C\&S adopts a propagation scheme based on Personalized PageRank (PPR), which always maintains certain input information to slow down the occurrence of over-smoothing. The expected number of smoothing iterations is controlled by the restart probability, which is a constant for all nodes. Therefore, C\&S still falls into the routine of fixed smoothing iteration. Instead, NDLS-L employs node-specific smoothing iterations. We compare each method's performance (test accuracy, \%) under the same data split as in the C\&S paper (60\%/20\%/20\% on three citation networks, official split on ogbn-papers100M), and the experimental results in Table~\ref{vs_ndlsl} show that NDLS-L outperforms C\&S in different datasets.

\textbf{Sensitivity to label rate.} During the ``Correct'' stage, C\&S propagates uncertainties from the training data across the graph to correct the base predictions. However, the uncertainties might not be accurate when the number of training nodes is relatively small, thus even degrading the performance. To confirm the above assumption, we conduct experiments on the Cora dataset under different label rates, and the experimental results are provided in Table~\ref{diff_label_rate}. As illustrated, the result of C\&S drops much faster than NDLS-L's when the label rate decreases. What's more, MLP+S (removing the ``Correct'' stage) outperforms MLP+C\&S when the label rate is low as expected.

Compared with C\&S, NDLS is more general in terms of smoothing types. C\&S can only smooth label predictions. Instead, NDLS can smooth both node features and label predictions and combine them to boost the model performance further.

\begin{table}[tbp!]
\centering
{
\noindent
\renewcommand{\multirowsetup}{\centering}
\resizebox{0.6\linewidth}{!}{
\begin{tabular}{c|ccc}
\toprule
\textbf{Methods}&\textbf{Cora}& \textbf{Citeseer}&\textbf{PubMed}\\
\midrule
MLP+C\&S & 76.7 & 70.8 & 76.5\\
MLP+C\&S+nd & \textbf{79.9} & \textbf{71.1} & \textbf{78.4}\\
\bottomrule
\end{tabular}}}
\caption{Performance comparison after combining the node-dependent idea with C\&S.}
\label{add_nd}
\end{table}

\textbf{Node Adaptive C\&S.} The node-dependent mechanism in our NDLS can easily be combined with C\&S. The two stages of C\&S both contain a smoothing process using the personalized PageRank matrix, where a coefficient controls the remaining percentage of the original node feature. Here, we can precompute the smoothed node features after the same smoothing step yet under different values like 0.1, 0.2, ..., 0.9. After that, we adopt the same strategy in our NDLS: for each node, we choose the first in the ascending order that the distance from the smoothed node feature to the stationarity is less than a tuned hyperparameter. By this means, the smoothing process in C\&S can be carried out in a node-dependent way.

We also evaluate the performance of C\&S combined with the node-dependent idea (represented as C\&S+nd) on the three citation networks under official splits, and the experimental results in Table~\ref{add_nd} show that C\&S combined with NDLS consistently outperforms the original version of C\&S.

\begin{table}[tbp!]
\centering
{
\noindent
\renewcommand{\multirowsetup}{\centering}
\resizebox{.95\linewidth}{!}{
\begin{tabular}{c|ccccccccccc}
\toprule
&\textbf{SGC}& \textbf{S$^2$GC}&\textbf{GBP}&\textbf{NDLS}&\textbf{SIGN}&\textbf{JK-Net}&\textbf{DAGNN}&\textbf{GCN}&\textbf{ResGCN}&\textbf{APPNP}&\textbf{GAT}\\
\midrule
Time & 1.00 & 1.19 & 1.20 & 1.50 & 1.59 & 11.42 & 14.39 & 20.43 & 20.49 & 28.88 & 33.23\\
Accuracy & 78.9 & 79.9 & 80.6 & \textbf{81.4} & 79.5 & 78.8 & 80.5 & 79.3 & 78.6 & 80.1 & 79.0\\
\bottomrule
\end{tabular}}}
\caption{Efficiency comparison on the PubMed dataset.}
\label{effi_pubmed}
\end{table}

\subsection{Training Efficiency Study}
we measure the training efficiency of the compared baselines on the widely used PubMed dataset. Using the training time of SGC as the baseline, the relative training time and the corresponding test accuracy of NDLS and the baseline methods are shown in Table~\ref{effi_pubmed}. Compared with other baselines, NDLS can get the highest test accuracy while maintaining competitive training efficiency.

\subsection{Experiments on ogbn-arxiv}
We also conduct experiments on the ogbn-arxiv dataset. The experiment results (test accuracy, \%) are provided in Table~\ref{perf_arxiv}. Although GAT outperforms NDLS on ogbn-arxiv dataset, it is hard to scale to large graphs like ogbn-papers100M dataset. Note that MLP+C\&S on the OGB leaderboard makes use of not only the original node feature but also diffusion embeddings and spectral embeddings. Here we remove the latter two embeddings for fairness, and the authentic MLP+C\&S achieves 71.58\% on the ogbn-arxiv dataset.

\begin{table}[tbp!]
\centering
{
\noindent
\renewcommand{\multirowsetup}{\centering}
\resizebox{.95\linewidth}{!}{
\begin{tabular}{c|ccccccccccc}
\toprule
&\textbf{MLP}& \textbf{MLP+C\&S}&\textbf{GCN}&\textbf{SGC}&\textbf{SIGN}&\textbf{DAGNN}&\textbf{JK-Net}&\textbf{S$^2$GC}&\textbf{GBP}&\textbf{NDLS}&\textbf{GAT}\\
\midrule
Accuracy & 55.50 & 71.58 & 71.74 & 71.72 & 71.95 & 72.09 & 72.19 & 72.21 & 72.45 & \underline{73.04} & \textbf{73.56} \\
\bottomrule
\end{tabular}}}
\caption{Performance comparison on the ogbn-arxiv dataset.}
\label{perf_arxiv}
\end{table}

\end{document}